\DeclareMathOperator{\var}{Var}
\newcommand{\be}{\mathbf{e}}
\newcommand{\bg}{\mathbf{g}}
\newcommand{\bu}{\mathbf{u}}
\newcommand{\bw}{\mathbf{w}}
\newcommand{\bx}{\mathbf{x}}
\newcommand{\bz}{\mathbf{z}}
\newcommand{\calA}{\mathcal{A}}
\newcommand{\calB}{\mathcal{B}}
\newcommand{\calD}{\mathcal{D}}
\newcommand{\calI}{\mathcal{I}}
\newcommand{\calN}{\mathcal{N}}
\newcommand{\calZ}{\mathcal{Z}}
\newtheorem{theorem}{Theorem}
\icmltitlerunning{Bounding Training Data Reconstruction in Private (Deep) Learning}
\begin{document}

\twocolumn[
\icmltitle{Bounding Training Data Reconstruction in Private (Deep) Learning}



\icmlsetsymbol{equal}{*}

\begin{icmlauthorlist}
\icmlauthor{Chuan Guo}{metaai}
\icmlauthor{Brian Karrer}{meta}
\icmlauthor{Kamalika Chaudhuri}{metaai}
\icmlauthor{Laurens van der Maaten}{metaai}
\end{icmlauthorlist}

\icmlaffiliation{metaai}{Meta AI}
\icmlaffiliation{meta}{Meta}

\icmlcorrespondingauthor{Chuan Guo}{chuanguo@fb.com}

\icmlkeywords{Machine Learning, ICML}

\vskip 0.3in
]



\printAffiliationsAndNotice{}  

\begin{abstract}
Differential privacy is widely accepted as the \emph{de facto} method for preventing data leakage in ML, and conventional wisdom suggests that it offers strong protection against privacy attacks. However, existing semantic guarantees for DP focus on membership inference, which may overestimate the adversary's capabilities and is not applicable when membership status itself is non-sensitive. In this paper, we derive semantic guarantees for DP mechanisms against training data reconstruction attacks under a formal threat model. We show that two distinct privacy accounting methods---R\'{e}nyi differential privacy and Fisher information leakage---both offer strong semantic protection against data reconstruction attacks.
\end{abstract}

\section{Introduction}
\label{sec:intro}

Machine learning models are known to memorize their training data. This vulnerability can be exploited by an adversary to compromise the privacy of participants in the training dataset when given access to the trained model and/or its prediction interface~\cite{fredrikson2014privacy, fredrikson2015model, shokri2017membership, carlini2019secret}.
By far the most accepted mitigation measure against such privacy leakage is differential privacy (DP;~\citet{dwork2014algorithmic}), which upper bounds the information contained in the learner's output about its training data via statistical divergences.
However, such a \emph{differential guarantee} is often hard to interpret, and it is unclear how much privacy leakage can be tolerated for a particular application~\cite{jayaraman2019evaluating}.

Recent studies have derived \emph{semantic guarantees} for differential privacy, that is, \emph{how does the private mechanism limit an attacker's ability to extract private information from the trained model?} For example, \citet{yeom2018privacy} showed that a differentially private learner can reduce the success rate of a membership inference attack to close to that of a random coin flip. Semantic guarantees serve as more interpretable translations of the DP guarantee and provide reassurance of protection against privacy attacks.
However, existing semantic guarantees focus on protection of membership status, which has several limitations: 1. There are many scenarios where membership status itself is not sensitive but the actual data value is, \emph{e.g.}, census data and location data. 2. It only bounds the leakage of the binary value of membership status as opposed to \emph{how much} information can be extracted. 3. Membership inference is empirically much easier than powerful attacks such as training data reconstruction~\cite{carlini2019secret, zhang2020secret, balle2022reconstructing},
and hence it may be possible to provide a strong semantic guarantee against data reconstruction attacks even when membership status cannot be protected.

In this work, we focus on deriving semantic guarantees against \emph{data reconstruction attacks} (DRA), where the adversary's goal is to reconstruct instances from the training dataset. 
Under mild assumptions, we show that if the learning algorithm is $(2,\epsilon)$-R\'{e}nyi differentially private, then the expected mean squared error (MSE) of an adversary's estimate for the private training data can be lower bounded by $\Theta(1/(e^\epsilon - 1))$. When $\epsilon$ is small, this bound suggests that the adversary's estimate incurs a high MSE and is thus unreliable, in turn guaranteeing protection against DRAs.

Furthermore, we show that a recently proposed privacy framework called \emph{Fisher information leakage} (FIL;~\citet{hannun2021measuring}) can be used to give tighter semantic guarantees for common private learning algorithms such as output perturbation~\cite{chaudhuri2011differentially} and private SGD~\cite{song2013stochastic, abadi2016deep}.
Importantly, FIL gives a \emph{per-sample} estimate of privacy leakage for every individual in the training set, and we empirically show that this per-sample estimate is highly correlated with the sample's vulnerability to data reconstruction attacks.
Finally, FIL accounting gives theoretical support for the observation that existing private learning algorithms do not leak much information about the vast majority of its training samples despite having a high privacy parameter.

\section{Background}
\label{sec:background}

\begin{figure*}[t!]
\centering
\begin{subfigure}{.47\textwidth}
  \centering
  \includegraphics[width=\linewidth]{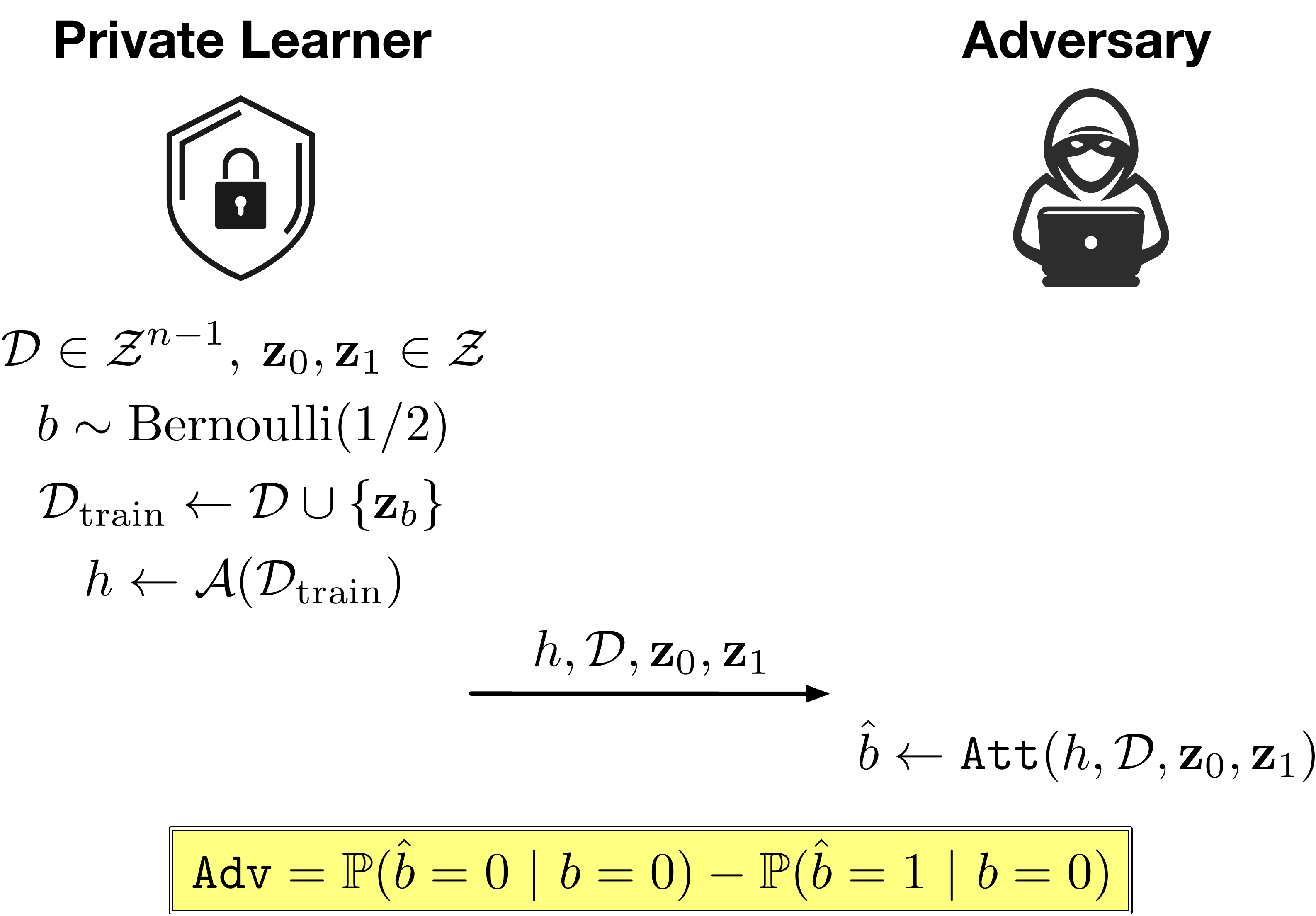}
  \caption{Membership inference attack game}
  \label{fig:mia_attack}
\end{subfigure}
\hspace{2ex}
\begin{subfigure}{.47\textwidth}
  \centering
  \includegraphics[width=0.92\linewidth]{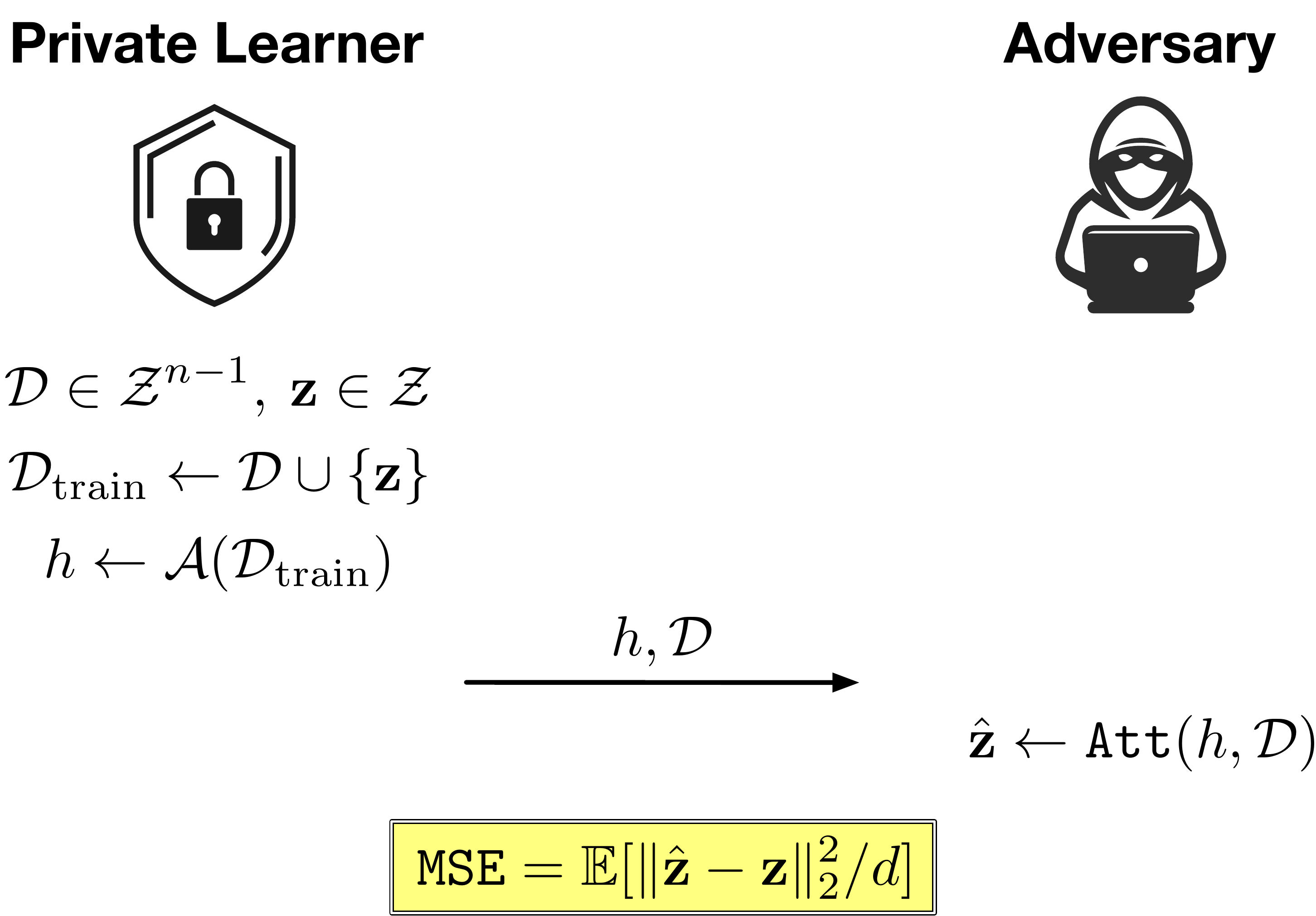}
  \caption{Data reconstruction attack game}
  \label{fig:dra_attack}
\end{subfigure}
\caption{Comparison of membership inference attacks (MIAs) and data reconstruction attacks (DRAs). Both attacks are formalized in terms of an attack game between a private learner and an adversary, and the metric of success is given in terms of advantage (\texttt{Adv}; higher is better) for MIA, and mean squared error (\texttt{MSE}; lower is better) for DRA.}
\label{fig:attack_comparison}
\end{figure*}

\paragraph{Data reconstruction attacks.} Machine learning algorithms often require the model to memorize parts of its training data~\cite{feldman2020does}, enabling adversaries to extract samples from the training dataset when given access to the trained model. Such \emph{data reconstruction attacks} (DRAs) have been carried out in realistic scenarios against face recognition models~\cite{fredrikson2015model, zhang2020secret} and neural language models~\cite{carlini2019secret, carlini2021extracting}, and constitute significant privacy risks for ML models trained on sensitive data.

\paragraph{Differential privacy.} The \emph{de facto} standard for data privacy in ML is \emph{differential privacy} (DP), 
which asserts that for adjacent datasets $\calD$ and $\calD'$ that differ in a single training sample, a model trained on $\calD$ is almost statistically indistinguishable from a model trained on $\calD'$, hence individual samples cannot be reliably inferred. Indistinguishability is measured using a statistical divergence $D$, and a (randomized) learning algorithm $\calA$ is differentially private if for any pair of adjacent datasets $\calD$ and $\calD'$, we have $D(\calA(\calD)~||~\calA(\calD')) \leq \epsilon$ for some fixed privacy parameter $\epsilon > 0$.
The most common choice for the statistical divergence $D$ is the \emph{max divergence}: $$D_\infty(P~||~Q) = \sup_{x \in \mathrm{supp}(Q)} \log \frac{P(x)}{Q(x)},$$ which bounds information leakage in the worst case and is the canonical choice for $\epsilon$-differential privacy~\cite{dwork2014algorithmic}. The weaker notion of $(\epsilon,\delta)$-DP uses the so-called ``hockey-stick'' divergence~\cite{polyanskiy2010channel}, which allows the max divergence bound to fail with probability at most $\delta > 0$~\cite{balle2018improving}. Another generalization uses the \emph{R\'{e}nyi divergence} of order $\alpha$~\cite{renyi1961measures}: $$D_\alpha(P~||~Q) = \frac{1}{\alpha - 1} \log \mathbb{E}_{x \sim Q}\left[\left( \frac{P(x)}{Q(x)}\right)^\alpha \right]$$ for $\alpha \in (1, \infty)$, and a learning algorithm $\calA$ is said to be $(\alpha, \epsilon)$-R\'{e}nyi differentially private (RDP;~\citet{mironov2017renyi}) if it is DP with respect to the $D_\alpha$ divergence.
Notably, an $(\alpha,\epsilon)$-RDP mechanism is also $(\epsilon + \log(1/\delta)/(\alpha-1), \delta)$-DP for any $0 < \delta < 1$~\cite{mironov2017renyi}, and RDP is the method of choice for composing multiple mechanisms such as in private SGD~\cite{song2013stochastic, abadi2016deep}. More optimal conversions between DP and RDP have been derived by \citet{asoodeh2021three}.

\paragraph{Semantic guarantees for differential privacy.} One challenge in applying differential privacy to ML is the selection of the privacy parameter $\epsilon$. For all statistical divergences, the distributions $\calA(\calD)$ and $\calA(\calD')$ are identical when $D(\calA(\calD)~||~\calA(\calD')) = 0$, hence a DP algorithm $\calA$ leaks no information about any individual when $\epsilon = 0$. However, it is not well-understood at what level of $\epsilon > 0$ does the privacy guarantee fail to provide any meaningful protection against attacks~\cite{jayaraman2019evaluating}.

Several works partially addressed this problem by giving semantic guarantees for DP against membership inference attacks (MIAs;~\citet{shokri2017membership, yeom2018privacy, salem2018ml}). In MIAs, the adversary's goal is to infer whether a given sample $\bz$ participated in the training set $\calD$ of a trained model. Formally, the attack can be modeled as a game between a learner and an adversary (see \autoref{fig:mia_attack}), where the membership of a sample is determined by a random bit $b$ and the adversary aims to output a prediction $\hat{b}$ of $b$.
The adversary's metric of success is given by the \emph{advantage} of the attack, which measures the difference between true and false positive rates of the prediction: $\texttt{Adv} = \mathbb{P}(\hat{b} = 0~|~b=0) - \mathbb{P}(\hat{b} = 1~|~b=0)$.
\citet{humphries2020differentially} showed that if $\calA$ is $\epsilon$-DP, then $\texttt{Adv} \leq (e^\epsilon - 1) / (e^\epsilon + 1)$. Hence if $\epsilon$ is small, then the adversary cannot perform significantly better than random guessing. For instance, if $\epsilon = 0.1$ then the probability of correctly predicting the membership of a sample is at most $(\texttt{Adv} + 1) / 2 \approx 53\%$, which is negligibly better than a random coin flip. \citet{yeom2018privacy} and \citet{erlingsson2019we} derived similar results.
\section{Formalizing Data Reconstruction Attacks}
\label{sec:setup}

\paragraph{Motivation.} Semantic guarantees for MIA can be useful for interpreting the protection of DP and selecting the privacy parameter $\epsilon$, but several issues remain:

1. Membership status is often not sensitive, but the underlying data value is. For example, a user's mobile device location can expose the user to unauthorized tracking, but its presence on the network is benign. In these scenarios, it is more meaningful to upper bound how much information can an adversary recover about a training sample.

2. Models trained on complex real-world datasets cannot achieve a low $\epsilon$ while maintaining high utility. \citet{tramer2020differentially} evaluated different private learning algorithms for training convolutional networks on the MNIST dataset, and showed that practically all current private learning algorithms require $\epsilon \geq 2$ in order to attain a reasonable level of test accuracy. At this $\epsilon$, the attacker's probability of correcting predicting membership becomes $>88\%$.

3. Data reconstruction is empirically much harder than MIA~\cite{balle2022reconstructing}, hence it may be possible to derive meaningful guarantees against DRAs even when the membership inference bound becomes vacuous.

\paragraph{Threat model.} Motivated by these shortcomings, we focus on formalizing data reconstruction attacks and deriving semantic guarantees against DRAs for private learning algorithms. \autoref{fig:dra_attack} defines the DRA game, which is a slight modification of the MIA game in \autoref{fig:mia_attack}. Let $\calZ$ be the data space, and suppose that the learner receives samples $\calD \in \calZ^{n-1}$ and $\bz \in \calZ$. Let $\calD_\text{train} = \calD \cup \{\bz\}$ be the training dataset, for which the randomized learner outputs a model $h \leftarrow \calA(\calD_\text{train})$ after training on $\calD_\text{train}$. The adversary receives $h$ and $\calD$ and runs the attack algorithm to obtain a reconstruction $\hat{\bz}$ of the sample $\bz$.

We highlight two major differences between the DRA game and the MIA game: 1. The attack target $\bz$ is unknown to the adversary. This change reflects the fact that the adversary's goal is to reconstruct $\bz$ given access to the trained model $h$, rather than infer the membership status of $\bz$. 2. The metric of success is $\texttt{MSE} = \mathbb{E}_h[\| \hat{\bz} - \bz \|_2^2/d]$, where $d$ is the data dimensionality. In other words, the adversary aims to achieve a low reconstruction MSE in expectation over the randomness of the learning algorithm $\calA$. Using MSE implicitly assumes that the underlying data is continuous and that the squared difference in $\hat{\bz} - \bz$ reflects semantic differences. While DRA motivates different metrics of success, we opt to measure MSE in our formulation.
\section{Error Bound From RDP}
\label{sec:rdp}

In this section, we show that any RDP learner implies a lower bound on the MSE of a reconstruction attack. 
Our crucial insight is to view the data reconstruction attack as a parameter estimation problem for the adversary: The sample $\bz$ induces a distribution over the space of models through the learning algorithm $\calA$. If we treat $\bz$ as the parameter of the distribution $\calA(\calD_\text{train})$\footnote{Under the assumptions outlined in \autoref{sec:setup}, all other parameters of this distribution, such as other training points in $\calD_\text{train}$ and hyperparameters, are known.}, we can then utilize statistical estimation theory to lower bound the estimation error of $\bz$ when given a single sample from the distribution $\calA(\calD_\text{train})$. 

Our main tool for proving this lower bound is the Hammersley-Chapman-Robbins bound (HCRB;~\citet{chapman1951minimum}), which we state and prove in \autoref{sec:crb_hcrb}. \autoref{thm:rdp_bound} below gives our MSE lower bound for RDP learning algorithms. Proof is given in \autoref{sec:proofs}.

\begin{theorem}
\label{thm:rdp_bound}
Let $\bz \in \calZ \subseteq \mathbb{R}^d$ be a sample in the data space $\calZ$, and let $\texttt{Att}$ be a reconstruction attack that outputs $\hat{\bz}(h)$ upon observing the trained model $h \leftarrow \calA(\calD_\text{train})$, with expectation $\mu(\bz) = \mathbb{E}_{\calA(\calD_\text{train})}[\hat{\bz}(h)]$. If $\calA$ is a $(2,\epsilon)$-RDP learning algorithm then:
\begin{equation*}
    \small
    \mathbb{E}\left[\|\hat{\bz}(h) - \bz\|_2^2 / d\right] \geq \underbrace{\frac{\sum_{i=1}^d \gamma_i^2 \mathrm{diam}_i(\calZ)^2/4d}{e^\epsilon - 1}}_\text{variance} + \underbrace{\frac{\|\mu(\bz) - \bz\|_2^2}{d}}_\text{squared bias},
\end{equation*}
where $\gamma_i = \inf_{\bz \in \calZ} |\partial \mu(\bz)_i / \partial \bz_i|$ and
$$\mathrm{diam}_i(\calZ) = \sup_{\bz, \bz' \in \calZ : \bz_j = \bz'_j \forall j \neq i} |\bz_i - \bz'_i|$$
is the diameter of $\calZ$ in the $i$-th dimension. In particular, if $\hat{\bz}(h)$ is unbiased then:
\begin{equation*}
    \mathbb{E}[\|\hat{\bz}(h) - \bz\|_2^2 / d] \geq \frac{\sum_{i=1}^d \mathrm{diam}_i(\calZ)^2/4d}{e^\epsilon - 1}.
\end{equation*}
\end{theorem}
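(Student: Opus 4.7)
The plan is to cast data reconstruction as a statistical estimation problem in which $\bz$ plays the role of the unknown parameter of the distribution $\calA(\calD \cup \{\bz\})$ and the adversary observes a single draw $h$ from this distribution. The main tool is the Hammersley-Chapman-Robbins bound (HCRB) applied coordinate-wise, combined with the standard bias--variance decomposition and the identification of the $\chi^2$-divergence with the R\'enyi divergence of order 2.

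First I would write
\[
\E\big[\|\hat{\bz}(h) - \bz\|_2^2 / d\big] \;=\; \frac{1}{d}\sum_{i=1}^d \var(\hat{\bz}(h)_i) \;+\; \frac{\|\mu(\bz) - \bz\|_2^2}{d},
\]
which immediately produces the squared-bias term of the theorem. It then suffices to lower bound each $\var(\hat{\bz}(h)_i)$.

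Next, I would view $\{\calA(\calD \cup \{\bz'\}) : \bz' \in \calZ\}$ as a family of distributions parametrized by $\bz'$ and apply HCRB to the scalar estimator $h \mapsto \hat{\bz}(h)_i$. For any alternative $\bz' \in \calZ$ differing from $\bz$ only in coordinate $i$,
\[
\var(\hat{\bz}(h)_i) \;\geq\; \frac{(\mu(\bz')_i - \mu(\bz)_i)^2}{\chi^2\big(\calA(\calD \cup \{\bz'\}) \,\|\, \calA(\calD \cup \{\bz\})\big)}.
\]
Because $\calD \cup \{\bz\}$ and $\calD \cup \{\bz'\}$ are adjacent and $\calA$ is $(2,\epsilon)$-RDP, the identity $D_2(P\,\|\,Q) = \log(1 + \chi^2(P\,\|\,Q))$ yields $\chi^2 \leq e^\epsilon - 1$, controlling the denominator. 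For the numerator, I would invoke the mean value theorem on the one-dimensional map $t \mapsto \mu(\bz + t\be_i)_i$ to get $|\mu(\bz')_i - \mu(\bz)_i| \geq \gamma_i |\bz'_i - \bz_i|$, and then maximize $|\bz'_i - \bz_i|$ over admissible $\bz'$. Using the elementary fact that any point in an interval of length $L$ lies at distance at least $L/2$ from one of its endpoints, I can pick $\bz'$ with $|\bz'_i - \bz_i| \geq \mathrm{diam}_i(\calZ)/2$, producing the $\gamma_i^2\, \mathrm{diam}_i(\calZ)^2/4$ numerator. Summing over $i$, dividing by $d$, and recombining with the bias term yields the bound. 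The unbiased case is a direct specialization: $\mu(\bz) = \bz$ forces $\gamma_i = 1$ and annihilates the bias term.

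The main obstacle is the geometric step at the end: strictly speaking the $\mathrm{diam}_i(\calZ)/2$ argument needs the axis-parallel slices of $\calZ$ through $\bz$ to be convex (i.e.\ intervals); otherwise one loses some constant. A secondary subtlety is that HCRB is classically stated for scalar estimators, but since we vary $\bz'$ only along the $i$-th axis, applying it to each coordinate reduces exactly to the scalar case and no multivariate extension is required. Routine regularity checks (finite variance, interchange of expectation and differentiation for $\mu$) are needed to justify the mean value step but do not affect the structure of the argument.
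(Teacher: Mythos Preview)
Your proposal is correct and follows essentially the same route as the paper: bias--variance decomposition, coordinate-wise HCRB, the identity $D_2=\log(1+\chi^2)$ to bound the denominator by $e^\epsilon-1$, the mean value theorem to extract $\gamma_i$, and maximization over the axis-aligned displacement to obtain the $\mathrm{diam}_i(\calZ)^2/4$ factor. Your caveat about needing the axis-parallel slice through $\bz$ to be an interval is a genuine implicit assumption that the paper's proof also relies on without stating it; otherwise the two arguments coincide.
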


\paragraph{Observations.} The semantic guarantee in \autoref{thm:rdp_bound} has several noteworthy features:

1. \emph{There is an explicit bias-variance trade-off for the adversary.} The adversary can control its bias-variance trade-off to optimize for MSE, with the trade-off factor determined by $\gamma_i$ and $\epsilon$. In essence, $\gamma_i$ measures how quickly the adversary's estimate $\hat{\bz}(h)$ changes with respect to $\bz$, and the lower bound \emph{degrades gracefully} with respect to this sensitivity.

2. \emph{The variance term is controlled by the privacy parameter $\epsilon$.} When $\epsilon=0$, all attacks have infinite variance, which reflects the fact that the adversary can only perform random guessing. As $\epsilon$ increases, the variance term decreases, hence the reconstruction attack can accurately estimate the underlying sample $\bz$.


3. \emph{The DRA bound can be meaningful even when MIA bounds may not be.} Suppose the input space is $\calZ = [0,100]$, then $\mathrm{diam}_1(\calZ) = 100$. At $\epsilon=2$, the unbiased bound evaluates to $10^4/(4 (e^\epsilon - 1)) \approx 391$, which means the adversary's estimate has standard deviation $\approx 19$, \emph{i.e.}, the adversary cannot be certain of their reconstruction up to $\pm 19$. This can be a very meaningful guarantee when the data is only semantically sensitive within a small range, \emph{e.g.}, age.

4. \emph{The bound also applies to $\epsilon$-DP.} R\'{e}nyi divergence is non-decreasing in its order $\alpha$~\cite{sason2016f}, \emph{i.e.}, $D_\alpha(P~||~Q) \leq D_\beta(P~||~Q)$ whenever $\alpha \leq \beta$, hence any $\epsilon$-DP mechanism satisfies \autoref{thm:rdp_bound} as well. Alternative, we can leverage tighter and more general conversions for $(\epsilon,\delta)$-DP~\cite{asoodeh2021three}.

\paragraph{Tightness.} The tightness of \autoref{thm:rdp_bound} has a significant dependence on $\mathrm{diam}_i(\calZ)$. Suppose that $\calZ = [0, M]$ for some $M > 0$, so $\mathrm{diam}_1(\calZ) = M$. Let $\calA(\calD_\text{train}) = z + \calN(0, \sigma^2)$ for any $z \in \calZ$, and let $\hat{z}(h) = h$ so that $\hat{z}$ is an unbiased estimator of $z$ with $\mathbb{E}[(\hat{z}(h) - z)^2] = \sigma^2$. It can be verified that $\calA$ satisfies $(2,\epsilon)$-RDP with $\epsilon = M^2/\sigma^2$, so \autoref{thm:rdp_bound} gives: $$\mathbb{E}[(\hat{z}(h) - z)^2] \geq \frac{M^2}{4(e^{M^2/\sigma^2} - 1)}.$$
As $M \rightarrow 0$, we have that:
$$\lim_{M \rightarrow 0} \frac{M^2}{4(e^{M^2/\sigma^2} - 1)} = \lim_{M \rightarrow 0} \frac{2M}{\frac{8M}{\sigma^2} e^{M^2/\sigma^2}} = \sigma^2 / 4,$$
so the bound is tight up to a constant factor. 
However, it is also clear that this bound converges to $0$ as $M \rightarrow \infty$, hence it can be arbitrarily loose in the worst case. We will show that Fisher information leakage---an alternative measure of privacy loss---can address this worst-case looseness.

\section{Error Bound From FIL}
\label{sec:fil}


\emph{Fisher information leakage} (FIL;~\citet{hannun2021measuring}) is a recently proposed framework for privacy accounting that is directly inspired by the parameter estimation view of statistical privacy. We show that FIL can be naturally adapted to give a tighter MSE lower bound compared to \autoref{thm:rdp_bound}.

\paragraph{Fisher information leakage.} Fisher information is a statistical measure of information about an underlying parameter from an observable random variable. Suppose that the learning algorithm $\calA$ produces a model $h \leftarrow \calA(\calD_\text{train})$ after training on $\calD_\text{train} = \calD \cup \{\bz\}$. The Fisher information matrix (FIM) of $h$ about the sample $\bz$ is given by:
\begin{equation*}
    \calI_h(\bz) = -\mathbb{E}_h\left[\nabla_\zeta^2 \log p_\calA(h|\zeta)|_{\zeta=\bz}\right],
\end{equation*}
where $p_\calA(h|\zeta)$ denotes the density of $h$ induced by the learning algorithm $\calA$ when $\bz = \zeta$. For example, if $\calA$ trains a linear regressor on $\calD_\text{train}$ with output perturbation~\cite{chaudhuri2011differentially}, then $p_\calA(h|\zeta)|_{\zeta = \bz}$ is the density function of $\calN(\bw^*, \sigma^2 I_d)$, with $\bw^*$ being the unique minimizer of the linear regression objective. Finally, FIL is defined as the spectral norm of the FIM: $\eta^2 = \| \calI_h(\bz) \|_2$.

\paragraph{Relationship to differential privacy.} There are close connections between FIL and the statistical divergences used to define DP. Fisher information measures the sensitivity of the density function $p_\calA(h|\zeta)|_{\zeta=\bz}$ with respect to the sample $\bz$. If FIL is zero, then the released model $h$ reveals no information about the sample $\bz$ since $\bz$ does not affect the (log) density of $h$. On the other hand, if FIL is large, then the (log) density of $h$ is very sensitive to change in $\bz$, hence revealing a lot of information about $\bz$.

It is noteworthy that DP is motivated by a similar reasoning. The divergence bound $D(\calA(\calD_\text{train})~||~\calA(\calD_\text{train}'))$ asserts that the sensitivity of $\calA$ to a single sample difference between $\calD_\text{train}$ and $\calD_\text{train}'$ is small, hence $h$ reveals very little information about any sample in $\calD_\text{train}$. In fact, it can be shown that Fisher information is the limit of chi-squared divergence~\cite{polyanskiy2020information}: For any $\bu \in \mathbb{R}^d$, $$\bu^\top \calI_h(\bz) \bu = \lim_{\Delta \rightarrow 0} \chi^2(\calA(\calD \cup \{\bz\})~||~\calA(\calD \cup \{\bz + \Delta \bu\})).$$ Since FIL is the spectral norm of $\calI_h(\bz)$, it upper bounds the chi-squared divergence between $\calA(\calD \cup \{\bz\})$ and $\calA(\calD \cup \{\bz'\})$ as $\bz' \rightarrow \bz$ from any direction. Crucially, this analysis is data-dependent and specific to each $\bz \in \calD_\text{train}$\footnote{This means that in practice, FIL should be kept secret to avoid unintended information leakage.}, while preserving the desirable properties of DP such as post-processing inequality~\cite{hannun2021measuring}, composition and subsampling (\autoref{sec:comp_and_sub}).

\paragraph{Cram\'{e}r-Rao bound.} FIL can be used to lower bound the MSE of DRAs via the Cram\'{e}r-Rao bound (CRB; \citet{kay1993fundamentals})---a well-known result for analyzing the efficiency of estimators (see \autoref{sec:crb_hcrb} for statement). We adapt the Cram\'{e}r-Rao bound to prove a similar MSE lower bound as in \autoref{thm:rdp_bound}. Proof is given in \autoref{sec:proofs}.

\begin{theorem}
\label{thm:fil_bound}
Assume the setup of \autoref{thm:rdp_bound}, and additionally that the log density function $\log p_\calA(h | \zeta)$ satisfies the regularity conditions in \autoref{thm:crb}. Then:
\begin{align*}
&\mathbb{E}[\|\hat{\bz}(h) - \bz\|_2^2/d] \geq \\
&\quad\quad \underbrace{\frac{\mathrm{Tr}(J_\mu(\bz) \calI_h(\bz)^{-1} J_\mu(\bz)^\top)}{d}}_\text{variance} + \underbrace{\frac{\|\mu(\bz) - \bz\|_2^2}{d}}_\text{squared bias}.
\end{align*}
In particular, if $\hat{\bz}(h)$ is unbiased then:
\begin{equation*}
\mathbb{E}[\|\hat{\bz}(h) - \bz\|_2^2/d] \geq d/\mathrm{Tr}(\calI_h(\bz)) \geq 1/\eta^2.
\end{equation*}
\end{theorem}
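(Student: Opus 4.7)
The plan is to combine a standard bias–variance decomposition of the MSE with the multivariate Cram\'{e}r--Rao bound (CRB) stated earlier in the appendix. First I would write
\begin{equation*}
\mathbb{E}\bigl[\|\hat{\bz}(h)-\bz\|_2^2\bigr]
= \mathbb{E}\bigl[\|\hat{\bz}(h)-\mu(\bz)\|_2^2\bigr] + \|\mu(\bz)-\bz\|_2^2,
\end{equation*}
using $\mu(\bz)=\mathbb{E}_{\calA(\calD_\text{train})}[\hat{\bz}(h)]$; the cross term vanishes. The first piece is $\mathrm{Tr}(\mathrm{Cov}(\hat{\bz}(h)))$, and dividing by $d$ already produces the claimed squared-bias summand.

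Next I would view $\hat{\bz}(h)$ as a (possibly biased) estimator of the parameter $\bz$ based on a single draw $h\sim p_\calA(\cdot\mid\bz)$. The regularity hypothesis on $\log p_\calA(h\mid\zeta)$ is exactly what is needed to invoke the multivariate CRB in \autoref{thm:crb}, which gives the Loewner inequality
\begin{equation*}
\mathrm{Cov}(\hat{\bz}(h)) \succeq J_\mu(\bz)\,\calI_h(\bz)^{-1}\,J_\mu(\bz)^\top.
\end{equation*}
Since trace is monotone with respect to the PSD order, taking traces and dividing by $d$ yields the variance lower bound $\mathrm{Tr}(J_\mu(\bz)\,\calI_h(\bz)^{-1}\,J_\mu(\bz)^\top)/d$, completing the general inequality.

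For the unbiased specialization, $\mu(\bz)=\bz$ so $J_\mu(\bz)=I_d$ and the bound collapses to $\mathrm{Tr}(\calI_h(\bz)^{-1})/d$. To pass from this to $d/\mathrm{Tr}(\calI_h(\bz))$, I would diagonalize $\calI_h(\bz)$ and apply the AM--HM inequality to its eigenvalues $\lambda_1,\dots,\lambda_d>0$, which gives $\sum_i 1/\lambda_i \geq d^2/\sum_i \lambda_i$, i.e.\ $\mathrm{Tr}(\calI_h(\bz)^{-1}) \geq d^2/\mathrm{Tr}(\calI_h(\bz))$. Finally, the trivial bound $\mathrm{Tr}(\calI_h(\bz))\leq d\,\|\calI_h(\bz)\|_2 = d\eta^2$ yields $d/\mathrm{Tr}(\calI_h(\bz)) \geq 1/\eta^2$.

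The main obstacle is not any single step but making sure the CRB is invoked in the right form: the matrix version for a vector-valued biased estimator, with Jacobian $J_\mu$ rather than identity. Verifying that the regularity assumptions of \autoref{thm:crb} propagate from $\log p_\calA(h\mid\zeta)$ to $\hat{\bz}(h)$ (so that interchange of differentiation and integration is legitimate) is the one place where care is needed; everything else is essentially algebra on PSD matrices and eigenvalue inequalities.
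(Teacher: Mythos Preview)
Your proposal is correct and essentially matches the paper's proof: bias--variance decomposition plus the matrix CRB for the general bound, then $J_\mu=I_d$ in the unbiased case followed by the eigenvalue inequality $\mathrm{Tr}(\calI_h(\bz)^{-1})\geq d^2/\mathrm{Tr}(\calI_h(\bz))$ (which the paper attributes to Cauchy--Schwarz rather than AM--HM, but these are the same here) and the trace--spectral-norm bound $\mathrm{Tr}(\calI_h(\bz))\leq d\eta^2$.
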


The bound in \autoref{thm:fil_bound} has a similar explicit bias-variance trade-off as that of \autoref{thm:rdp_bound}: The Jacobian $J_\mu(\bz)$ measures how sensitive the estimator $\hat{\bz}(h)$ is to $\bz$, which interacts with the FIM in the variance term. Notably, the bound for unbiased estimator decays \emph{quadratically} with respect to the privacy parameter $\eta$ as opposed to exponentially in \autoref{thm:rdp_bound}. We show in \autoref{sec:experiments} that this scaling also results in tighter MSE lower bounds in practice, yielding a better privacy-utility trade-off for the same private mechanism.

\section{Private SGD with FIL Accounting}
\label{sec:fil_sgd}

Private SGD with Gaussian gradient perturbation~\cite{song2013stochastic, abadi2016deep} is a common technique for training DP models, especially neural networks.
In this section, we extend FIL accounting to the setting of private SGD by showing analogues of composition and subsampling bounds for FIL. This enables the use of \autoref{thm:fil_bound} to derive tighter per-sample estimates of vulnerability to data reconstruction attacks for private SGD learners.

\subsection{FIL Accounting for Composition and Subsampling}
\label{sec:comp_and_sub}

\paragraph{FIL for a single gradient step.} At time step $t \geq 1$, let $\calB_t \subseteq \calD_\text{train}$ be a batch of samples from $\calD_\text{train}$, and let $\bw_{t-1}$ be the model parameters before update. Denote by $\ell(\bz; \bw_{t-1})$ the loss of the model at a sample $\bz \in \calB_t$. Private SGD computes the update~\cite{abadi2016deep}:
\begin{align*}
    \bg_t(\bz) &\leftarrow \nabla_\bw \ell(\bz; \bw)|_{\bw = \bw_{t-1}} \quad \forall \bz \in \calB_t \\
    \tilde{\bg}_t(\bz) &\leftarrow \bg_t(\bz) / \max(1, \| \bg_t(\bz) \|_2 / C) \\
    \bar{\bg}_t &\leftarrow \frac{1}{|\calB_t|} \left( \sum_{\bz \in \calB_t} \tilde{\bg}_t(\bz) + \calN(\mathbf{0}, \sigma^2 C^2 \mathbf{I}) \right) \\
    \bw_t &\leftarrow \bw_{t-1} - \rho \bar{\bg}_t
\end{align*}
where $\mathbf{I}$ is the identity matrix, $C > 0$ is the per-sample clipping norm, $\sigma > 0$ is the noise multiplier, and $\rho > 0$ is the learning rate. Privacy is preserved using the Gaussian mechanism~\cite{dwork2014algorithmic} by adding $\calN(\mathbf{0}, \sigma^2 C^2 \mathbf{I})$ to the aggregate (clipped) gradient $\sum_{\bz \in \calB_t} \tilde{\bg}_t(\bz)$.
\citet{hannun2021measuring} showed that the Gaussian mechanism also satisfies FIL privacy, where the FIM is given by:
\begin{equation}
    \label{eq:per_step_fil}
    \calI_{\bar{\bg}_t}(\bz) = \left. \frac{1}{\sigma^2} \nabla_\zeta \tilde{\bg}_t(\zeta)^\top \nabla_\zeta \tilde{\bg}_t(\zeta) \right|_{\zeta = \bz}
\end{equation}
for any $\bz \in \calD_\text{train}$. In particular, if $\bz \notin \calB_t$ then $\calI_{\bar{\bg}_t}(\bz) = 0$. The quantity $\nabla_\zeta \tilde{\bg}_t(\zeta)$ is a second-order derivative of the clipped gradient $\tilde{\bg}_t(\zeta)$, which is computable using popular automatic differentiation packages such as PyTorch~\cite{paszke2019pytorch, functorch2021} and JAX~\cite{jax2018github}. We will discuss computational aspects of FIL for private SGD in \autoref{sec:computing}.

\paragraph{Composition of FIL across multiple gradient steps.} We first consider a simple case for composition where the batches are fixed. \autoref{thm:composition} shows that in order to compute the FIM for the final model $h$, it suffices to compute the per-step FIM $\calI_{\bar{\bg}_t}$ and take their sum.

\begin{theorem}
\label{thm:composition}
Let $\bw_0$ be the model's initial parameters, which is drawn independently of $\calD_\text{train}$. Let $T$ be the total number of iterations of SGD and let $\calB_1,\ldots,\calB_T$ be a fixed sequence of batches from $\calD_\text{train}$. Then:
\begin{equation*}
    \calI_h(\bz) \preceq \mathbb{E}_{\bw_0,\bar{\bg}_1,\ldots,\bar{\bg}_T}\left[ \sum_{t=1}^T \calI_{\bar{\bg}_t}(\bz | \bw_0,\bar{\bg}_1,\ldots,\bar{\bg}_{t-1})\right],
\end{equation*}
where $U \preceq V$ means that $V - U$ is positive semi-definite.
\end{theorem}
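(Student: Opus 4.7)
The plan is to combine two classical properties of the Fisher information matrix: the data-processing inequality (post-processing) and the chain rule for conditional Fisher information. Since $h = \bw_T$ is a deterministic (post-processing) function of the initialization $\bw_0$ together with the noisy aggregate gradients $\bar{\bg}_1,\ldots,\bar{\bg}_T$, the data-processing inequality for FIM immediately gives
\begin{equation*}
\calI_h(\bz) \preceq \calI_{\bw_0,\bar{\bg}_1,\ldots,\bar{\bg}_T}(\bz).
\end{equation*}
This reduces the theorem to bounding the joint FIM on the right-hand side.

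Next I would apply the chain rule to the joint log-density. Writing
\begin{equation*}
\log p(\bw_0,\bar{\bg}_1,\ldots,\bar{\bg}_T \mid \bz) = \log p(\bw_0) + \sum_{t=1}^T \log p(\bar{\bg}_t \mid \bw_0,\bar{\bg}_{1:t-1},\bz),
\end{equation*}
where I used that $\bw_0$ is drawn independently of $\calD_\text{train}$ and hence independent of $\bz$, so its contribution to the score function is zero. Differentiating twice in $\bz$ and taking the negative expectation, the diagonal terms become exactly $\mathbb{E}[\calI_{\bar{\bg}_t}(\bz\mid \bw_0,\bar{\bg}_{1:t-1})]$. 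The main technical step is to verify that the off-diagonal (cross) terms vanish. For $s<t$, the cross term is an expectation of the outer product of two score functions; conditioning on $(\bw_0,\bar{\bg}_{1:t-1})$ and using that the conditional score of $\bar{\bg}_t$ has zero mean (under the usual regularity conditions inherited from \autoref{thm:crb}) kills the cross term by the tower property. Summing gives
\begin{equation*}
\calI_{\bw_0,\bar{\bg}_1,\ldots,\bar{\bg}_T}(\bz) = \mathbb{E}\left[\sum_{t=1}^T \calI_{\bar{\bg}_t}(\bz \mid \bw_0,\bar{\bg}_{1:t-1})\right],
\end{equation*}
which combined with the post-processing bound yields the claim.

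The hard part will not be the algebra but the regularity justification: we need the dominated-convergence conditions that allow swapping $\nabla_\zeta^2$ with the integral defining each conditional density, and we need the conditional score-mean identity $\mathbb{E}[\nabla_\zeta \log p(\bar{\bg}_t\mid \bw_0,\bar{\bg}_{1:t-1},\zeta)\mid \bw_0,\bar{\bg}_{1:t-1}]=0$ to hold. Both follow from the fact that each $\bar{\bg}_t$ is a Gaussian-perturbed clipped-gradient sum with a smooth-in-$\zeta$ mean and fixed covariance, so the conditional densities are log-smooth in $\zeta$ with integrable derivatives. One mild care point is the non-differentiability of the clipping map on the sphere $\|\bg_t(\zeta)\|_2=C$; this is a measure-zero set that can be handled either by assuming $\ell$ is smooth and $\nabla\ell$ almost never hits the clip boundary exactly, or by invoking the Clarke subdifferential as is standard in the DP-SGD literature. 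Finally, a brief remark should be made that the same argument transparently extends to randomized batch choice, as long as $\calB_t$ is drawn from a distribution that does not depend on $\bz$ given the past—this is the hook used later for the subsampling bound.
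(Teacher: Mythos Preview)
Your proposal is correct and follows essentially the same approach as the paper: apply the post-processing inequality for Fisher information to bound $\calI_h(\bz)$ by the joint FIM of $(\bw_0,\bar{\bg}_1,\ldots,\bar{\bg}_T)$, then use the chain rule together with $\calI_{\bw_0}(\bz)=0$ to decompose the joint FIM into the sum of conditional FIMs. The only cosmetic difference is that the paper cites both the post-processing inequality and the chain rule as known results from \cite{zamir1998proof}, whereas you re-derive the chain rule by explicitly showing the cross terms vanish via the tower property and the zero-mean conditional score identity.
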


\autoref{thm:composition} has the following important practical implication: For each realization of $\bw_0,\bar{\bg}_1,\ldots,\bar{\bg}_T$ (\emph{i.e.}, a single training run), the realized FIM for that run can be computed by summing the per-step FIMs $\calI_{\bar{\bg}_t}$ conditioned on the \emph{realized} model parameter $\bw_{t-1}$ for $t=1,\ldots,T$. This gives an unbiased estimate of an upper bound for $\calI_h(\bz)$ via Monte-Carlo, and we can obtain a more accurate upper bound by repeating the training run multiple times and averaging.

\paragraph{Subsampling.} Privacy amplification by subsampling~\cite{kasiviswanathan2011can} is a powerful technique for reducing privacy leakage by randomizing the batches in private SGD: we draw each $\calB_t$ uniformly from the set of all $B$-subsets of $\calD_\text{train}$, where $B$ is the batch size. The following theorem shows that private SGD with FIL accounting also enjoys a subsampling amplification bound similar to DP~\cite{abadi2016deep} and RDP~\cite{wang2019subsampled, mironov2019r}; the proof is given in \autoref{sec:proofs}.

\begin{theorem}
\label{thm:subsampling}
Let $\hat{\bg}_t$ be the perturbed gradient at time step $t$ where the batch $\calB_t$ is drawn by sampling a subset of size $B$ from $\calD_\text{train}$ uniformly at random, and let $q = B / |\calD_\text{train}|$ be the sampling ratio. Then:
\begin{equation}
    \label{eq:convexity_bound}
    \calI_{\bar{\bg}_t}(\bz) \preceq \mathbb{E}_{\calB_t}[\calI_{\bar{\bg}_t}(\bz|\calB_t)].
\end{equation}
Furthermore, if the gradient perturbation mechanism is also $\epsilon$-DP, then:
\begin{equation}
    \label{eq:tight_subsampling_bound}
    \calI_{\bar{\bg}_t}(\bz) \preceq \frac{q}{q + (1-q) e^{-\epsilon}} \mathbb{E}_{\calB_t}[\calI_{\bar{\bg}_t}(\bz|\calB_t)].
\end{equation}
\end{theorem}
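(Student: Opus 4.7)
The plan is to exploit the marginal--mixture identity $p(g|\bz) = \mathbb{E}_{\calB_t}[p(g|\bz,\calB_t)]$, viewing the random batch $\calB_t$ as a latent variable drawn independently of $\bz$, and to read off both claims as different ways of bounding the resulting score function. For (\ref{eq:convexity_bound}) the argument is simply convexity of Fisher information under mixtures: Bayes' rule rewrites the marginal score as a posterior average of the conditional scores,
\begin{equation*}
\nabla_\zeta \log p(g|\zeta)|_{\zeta=\bz} = \mathbb{E}_{\calB_t \mid g,\bz}\!\left[\nabla_\zeta \log p(g|\zeta,\calB_t)|_{\zeta=\bz}\right],
\end{equation*}
after which Jensen's inequality in the Loewner order (operator-convexity of $v \mapsto vv^\top$), followed by iterated expectation in the outer-product form of the FIM, delivers $\calI_{\bar{\bg}_t}(\bz) \preceq \mathbb{E}_{\calB_t}[\calI_{\bar{\bg}_t}(\bz|\calB_t)]$.

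For the sharper bound (\ref{eq:tight_subsampling_bound}) I would split the batch mixture according to whether it contains $\bz$: $p(g|\bz) = (1-q)p_0(g) + q p_1(g|\bz)$, where $p_0$ is the conditional density given $\bz\notin\calB_t$ (\emph{independent} of $\bz$) and $p_1$ is the conditional given $\bz\in\calB_t$. Since only the second piece depends on $\bz$, the score simplifies to
\begin{equation*}
\nabla_\zeta \log p(g|\zeta) = \lambda(g|\bz)\,\nabla_\zeta \log p_1(g|\zeta),\qquad \lambda(g|\bz) := \frac{q\,p_1(g|\bz)}{p(g|\bz)}.
\end{equation*}
The key algebraic identity $\lambda\cdot p(g|\bz) = q\,p_1(g|\bz)$ lets one factor of $\lambda$ be absorbed as a change of integration measure, yielding
\begin{equation*}
\calI_{\bar{\bg}_t}(\bz) = q\,\mathbb{E}_{g \sim p_1(\cdot|\bz)}\!\left[\lambda(g|\bz)\,\nabla\log p_1\,(\nabla\log p_1)^\top\right].
\end{equation*}
Rewriting $\lambda = q/(q + (1-q)\,p_0(g)/p_1(g|\bz))$ and invoking a pointwise density-ratio bound $p_0(g)/p_1(g|\bz) \geq e^{-\epsilon}$ (established below) upper bounds $\lambda$ uniformly by $c := q/(q+(1-q)e^{-\epsilon})$, hence $\calI_{\bar{\bg}_t}(\bz) \preceq qc\,\calI_{p_1}(\bz)$. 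One more application of the convexity step from (\ref{eq:convexity_bound}) gives $\calI_{p_1}(\bz) \preceq \mathbb{E}_{\calB_t \mid \bz \in \calB_t}[\calI_{\bar{\bg}_t}(\bz|\calB_t)]$, which equals $(1/q)\,\mathbb{E}_{\calB_t}[\calI_{\bar{\bg}_t}(\bz|\calB_t)]$ because $\calI_{\bar{\bg}_t}(\bz|\calB_t)$ vanishes whenever $\bz\notin\calB_t$. The $q$-factors rearrange to exactly $c\,\mathbb{E}_{\calB_t}[\calI_{\bar{\bg}_t}(\bz|\calB_t)]$, as claimed.

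The main obstacle I foresee is converting $\epsilon$-DP of the underlying Gaussian mechanism, which is an adjacent-dataset statement, into the \emph{averaged} density-ratio inequality $p_0(g)/p_1(g|\bz) \geq e^{-\epsilon}$ between two mixtures supported on different families of size-$B$ subsets ($\binom{n-1}{B}$ without $\bz$ versus $\binom{n-1}{B-1}$ containing $\bz$). I would handle this with a swap coupling: for each batch $\calB$ containing $\bz$, pair $\calB$ with each of the $n-B$ batches $\calB_{\bz\to\bz'} := (\calB\setminus\{\bz\})\cup\{\bz'\}$ for $\bz' \in \calD_\text{train}\setminus\calB$; $\epsilon$-DP gives $p_\calA(g|\calB) \leq e^\epsilon p_\calA(g|\calB_{\bz\to\bz'})$ pointwise, and summing over all such $(\calB,\bz')$ while counting how many pairs map to each non-$\bz$ batch (each arises exactly $B$ times) collapses via the identity $\binom{n-1}{B}=\binom{n-1}{B-1}(n-B)/B$ to $p_1(g|\bz) \leq e^\epsilon p_0(g)$. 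Beyond this combinatorial coupling, the remainder of the argument is routine score-function manipulation plus the Jensen bound from the first part.
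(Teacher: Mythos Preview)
Your proposal is correct and matches the paper's proof in all essentials: the first bound via convexity of Fisher information under mixtures (your Jensen/posterior-score formulation is equivalent to the paper's pointwise inequality $\frac{(\lambda a+(1-\lambda)b)^2}{\lambda x+(1-\lambda)y}\le \lambda\frac{a^2}{x}+(1-\lambda)\frac{b^2}{y}$), and the second bound via the split $p=(1-q)p_0+qp_1$, the uniform bound $\lambda\le q/(q+(1-q)e^{-\epsilon})$, and a final convexity step conditional on $\bz\in\calB_t$. Your swap-coupling argument for $p_1\le e^{\epsilon}p_0$ is exactly the paper's, only run in the opposite direction (you swap $\bz$ out of each $\calB\ni\bz$; the paper swaps $\bz$ into each $\calB_t\not\ni\bz$), with the same $B$-to-$(n-B)$ counting.
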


\paragraph{Accounting algorithm.} We can combine \autoref{thm:composition} and \autoref{thm:subsampling} to give the full FIL accounting equation for subsampled private SGD:
\begin{equation}
    \label{eq:fim_monte_carlo_subsample}
    \calI_h(\bz) \preceq \mathbb{E}_{\bw_0,\calB_1,\ldots,\calB_T,\bar{\bg}_1,\ldots,\bar{\bg}_T}\left[\kappa \sum_{t=1}^T \calI_{\bar{\bg}_t}(\bz)\right],
\end{equation}
where $\kappa$ is either $1$ or $q/(q+(1-q)e^{-\epsilon})$ depending on which bound in \autoref{thm:subsampling} is used. That is, we perform Monte-Carlo estimation of the FIM by randomizing the initial parameter vector $\bw_0$ and batches $\calB_1,\ldots,\calB_T$, and summing up the per-step FIMs.
Note that \autoref{eq:tight_subsampling_bound} in \autoref{thm:subsampling} depends on the DP privacy parameter $\epsilon$ of the gradient perturbation mechanism. It is well-known that the Gaussian mechanism satisfies $(\epsilon,\delta)$-DP where $\delta > 0$ and $\epsilon = 2\sqrt{2 \log(1.25 / \delta)} / \sigma$~\cite{dwork2014algorithmic}. Thus, when applying \autoref{thm:subsampling} to private SGD with Gaussian gradient perturbation, there is an arbitrarily small but non-zero probability $\delta$ that the tighter bound in \autoref{eq:tight_subsampling_bound} fails, and one must fall back to the simple bound in \autoref{eq:convexity_bound}. In practice, we set $\delta$ so that the total failure probability across all iterations $t=1,\ldots,T$ is at most $1/|\calD_\text{train}|$.

\begin{algorithm}[t]
\caption{FIL computation for private SGD.}
\label{alg:fil_sgd}
\begin{algorithmic}[1]
\STATE \textbf{Input}: Dataset $\calD_\text{train}$, learning rate $\rho > 0$, noise multiplier $\sigma > 0$, norm clip threshold $C > 0$, failure probability $\delta > 0$.
\STATE Initialize model parameters $\bw_0$ independently of $\calD_\text{train}$.
\STATE Initialize FIL accountant $\calI(\bz) = 0$ for all $\bz \in \calD_\text{train}$.
\STATE $\epsilon \gets 1.115 \cdot 2 \sqrt{2 \log(1.25 / \delta)} / \sigma, \; \kappa \gets \frac{q}{q + (1-q) e^{-\epsilon}}$
\FOR {$t \gets 1$ to $T$}
\STATE Sample batch $\calB_t$ uniformly at random from $\calD_\text{train}$ without replacement.
\FOR {$\bz \in \calB_t$}
\STATE $\bg_t(\bz) \gets \nabla_\bw \ell(\bz; \bw)|_{\bw = \bw_t}$
\STATE $\tilde{\bg}_t(\bz) \gets \bg_t(\bz) / (\mathrm{GELU}(\| \bg_t(\bz) \|_2 / C - 1) + 1)$
\STATE $\calI(\bz) \gets \calI(\bz) + \left. \frac{\kappa}{\sigma^2} \nabla_\zeta \tilde{\bg}_t(\zeta)^\top \nabla_\zeta \tilde{\bg}_t(\zeta) \right|_{\zeta = \bz}$
\ENDFOR
\STATE $\bar{\bg}_t \gets \frac{1}{|\calB_t|} \left( \sum_{\bz \in \calB_t} \tilde{\bg}_t(\bz) + \calN(\mathbf{0}, \sigma^2 C^2 \mathbf{I}) \right)$
\STATE $\bw_t \gets \bw_{t-1} - \rho \bar{\bg}_t$
\ENDFOR
\STATE \textbf{Return}: Fisher information upper bound $\{\calI(\bz)\}_{\bz \in \calD_\text{train}}$.
\end{algorithmic}
\end{algorithm}

\subsection{Computing FIL}
\label{sec:computing}

\paragraph{Handling non-differentiability.} The core quantity in FIL accounting is the per-step FIM of the gradient in \autoref{eq:per_step_fil}, which involves computing a second-order derivative $\nabla_\zeta \tilde{\bg}_t(\zeta)$ whose existence depends on the loss $\ell(\bz; \bw)$ being differentiable everywhere in both $\bz$ and $\bw$. This is not always the case since: 1. The network may have non-differentiable activation functions such as ReLU; and 2. The gradient norm clip operator requires computing $\max(1, \| \bg_t(\bz) \|_2 / C)$, which is also non-differentiable.

We address the first problem by replacing ReLU with the $\tanh$ activation function, which is smooth and has been recently found to be more suitable for private SGD training~\cite{papernot2020tempered}. The second problem can be addressed using the GELU function~\cite{hendrycks2016gaussian}, which is a smooth approximation to $\max(0, z)$. In particular, we replace the $\max(1, z)$ function with $\mathrm{GELU}(z-1) + 1$.

Algorithm \ref{alg:fil_sgd} summarizes the FIL computation with this modified norm clip operator in pseudo-code. We substitute hard gradient norm clipping using GELU in line 9. It can be verified that gradient norm clipping using GELU introduces a small multiplicative overhead in the clipping threshold:
$\| \tilde{\bg}_t(\bz) \|_2 \leq 1.115 C$ if $\tilde{\bg}_t(\bz) = \bg_t(\bz) / (\mathrm{GELU}(\| \bg_t(\bz) \|_2 / C - 1) + 1)$.

\paragraph{Improving computational efficiency.} Computation of the second-order derivative $\nabla_\zeta \tilde{\bg}_t(\zeta)$ can be done in JAX~\cite{jax2018github} using the \texttt{jacrev} operator. However, the dimensionality of the derivative $\nabla_\zeta \tilde{\bg}_t(\zeta)$ is $p \times d$, where $p$ is the number of model parameters and $d$ is the data dimensionality, which can be too costly to store in memory.
Fortunately, the bound for unbiased estimator in \autoref{thm:fil_bound} only requires computing either the trace or the spectral norm of $\calI_{\bar{\bg}_t}(\bz)$, which does not require instantiating the full second-order derivative $\nabla_\zeta \tilde{\bg}_t(\zeta)$. For instance,
\begin{equation}
    \label{eq:fim_trace}
    \mathrm{Tr}(\calI_{\bar{\bg}_t}(\bz)) = \sum_{i=1}^d \be_i^\top \calI_{\bar{\bg}_t}(\bz) \be_i = \sum_{i=1}^d \frac{\|\nabla_\zeta \tilde{\bg}_t(\zeta) \be_i |_{\zeta = \bz}\|_2^2}{\sigma^2},
\end{equation}
which can be computed using only Jacobian-vector products (\texttt{jvp} in JAX) without constructing the full Jacobian matrix. This can be done in Algorithm \ref{alg:fil_sgd} by modifying Line 10 accordingly. Furthermore, we can obtain an unbiased estimate of $\mathrm{Tr}(\calI_{\bar{\bg}_t}(\bz))$ by sampling the coordinates $i=1,\ldots,d$ in \autoref{eq:fim_trace} stochastically. Doing so gives a Monte-Carlo estimate of $\mathrm{Tr}(\calI_h(\bz))$ using \autoref{eq:fim_monte_carlo_subsample} since trace is a linear operator. Similarly, we can compute the spectral norm using JVP via power iteration.
\section{Experiments}
\label{sec:experiments}

We evaluate our MSE lower bounds in \autoref{thm:rdp_bound} and \autoref{thm:fil_bound} for unbiased estimators and show that RDP and FIL both provide meaningful semantic guarantees against DRAs.
In addition, we evaluate the informed adversary attack~\cite{balle2022reconstructing} against privately trained models and show that a sample's vulnerability to this reconstruction attack is closely captured by the FIL lower bound. Code to reproduce our results is available at \url{https://github.com/facebookresearch/bounding_data_reconstruction}.

\subsection{Linear Logistic Regression}
\label{sec:logistic}

We first consider linear logistic regression for binary MNIST~\cite{lecun1998gradient} classification of digits $0$ vs. $1$. The training set contains $n=12,665$ samples. Each sample $\bz = (\bx, y)$ consists of an input image $\bx \in [0,1]^{784}$ and a label $y \in \{0,1\}$. Since the value of $y$ is discrete, we treat the label as public and only seek to prevent reconstruction of the image $\bx$.

\paragraph{Privacy accounting.} The linear logistic regressor is trained privately using output perturbation~\cite{chaudhuri2011differentially} by adding Gaussian noise $\mathcal{N}(0, \sigma^2 \mathbf{I})$ to the non-private model. For a given L2 regularization parameter $\lambda > 0$ and noise parameter $\sigma > 0$, it can be shown that output perturbation satisfies $(2,\epsilon)$-RDP with $\epsilon = 4/(n \lambda \sigma)^2$. For FIL accounting, we follow \citet{hannun2021measuring} and compute the full Fisher information matrix $\calI_h(\bz)$, then take the average diagonal value $\bar{\eta}^2 := \mathrm{Tr}(\calI_h(\bz))/d$ in order to apply \autoref{thm:fil_bound}. The final estimate is computed as an average across 10 runs. We refer to this quantity as the \emph{diagonal Fisher information loss} (dFIL).

\paragraph{Result.} We train the model with $\lambda=10^{-2}$ and $\sigma=10^{-2}$, achieving a near-perfect test accuracy of $99.95\%$ and $(2,\epsilon)$-RDP with $\epsilon = 2.49$. \autoref{fig:mnist_linear_hist} shows the RDP lower bound in \autoref{thm:rdp_bound} and the histogram of per-sample dFIL lower bounds in \autoref{thm:fil_bound}. Since the data space is $[0,1]^{784}$, we have that $\mathrm{diam}_i(\calZ) = 1$ for all $i$, so the RDP bound reduces to $\texttt{MSE} \geq 1/(4(e^\epsilon - 1))$, while the dFIL bound is $\texttt{MSE} \geq 1/\bar{\eta}^2$. The plot shows that the RDP bound is $\approx 0.02$, while all the per-sample dFIL bounds are $>1$. Since $\texttt{MSE} \leq 1$ can be achieved by simply guessing any value within $[0,1]^{784}$, we regard the vertical line of $\texttt{MSE} = 1$ as perfect privacy. Hence the dFIL predicts that all training samples are safe from reconstruction attacks.
Moreover, there is an extremely wide range of values for the per-sample dFIL bounds. We show in the following experiment that these values are highly indicative of how susceptible the sample is to an actual data reconstruction attack.

\begin{figure}[t!]
\centering
\includegraphics[width=\linewidth]{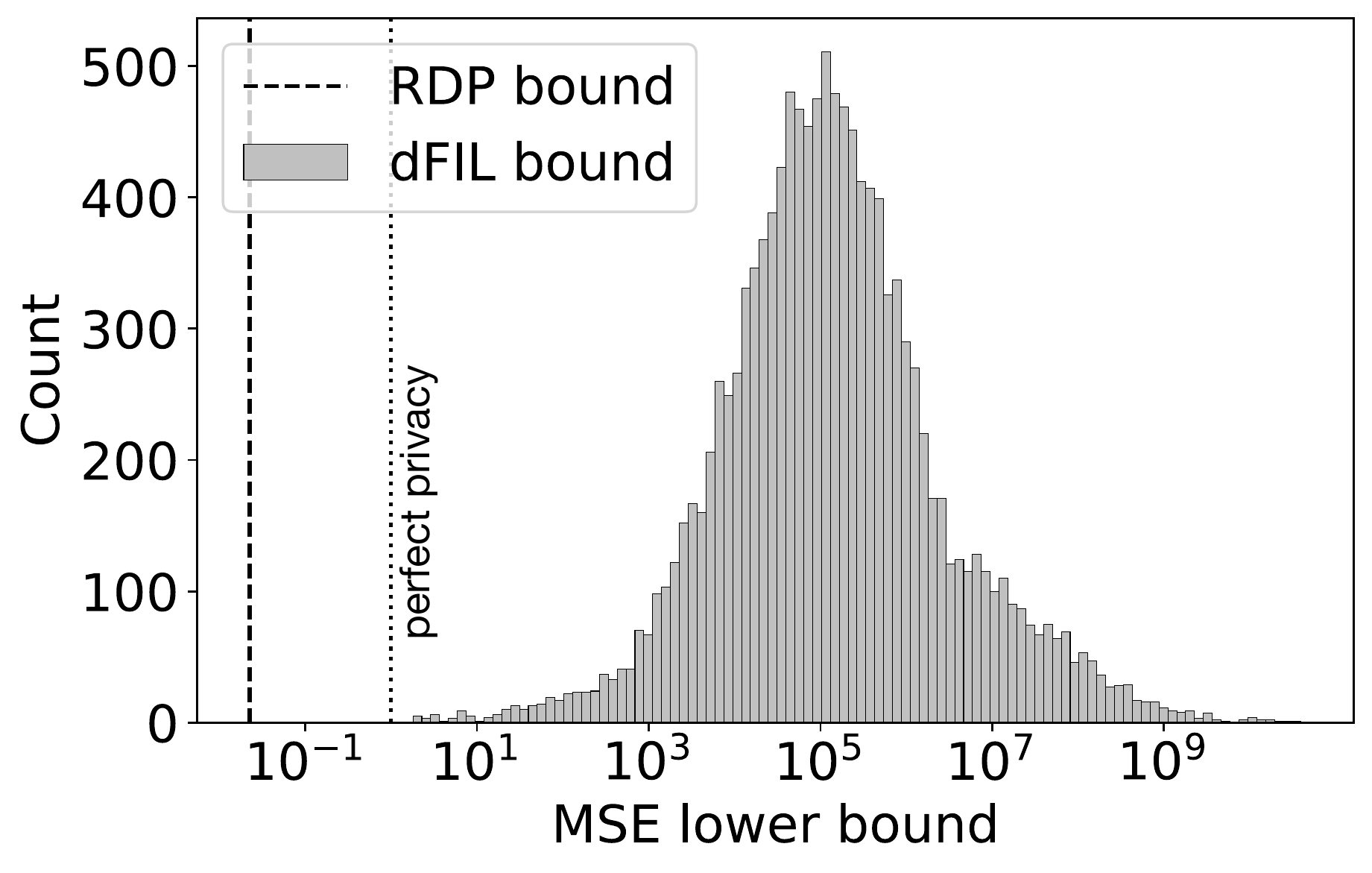}
\caption{Plot showing the RDP lower bound and histogram of the per-sample FIL lower bound for the MNIST 0 vs. 1 classifier. The vertical line at $\texttt{MSE} = 1$ represents the perfect privacy threshold, which is the MSE attainable by a random guessing adversary.}
\label{fig:mnist_linear_hist}
\end{figure}

\begin{figure}[t!]
\centering
\includegraphics[width=\linewidth]{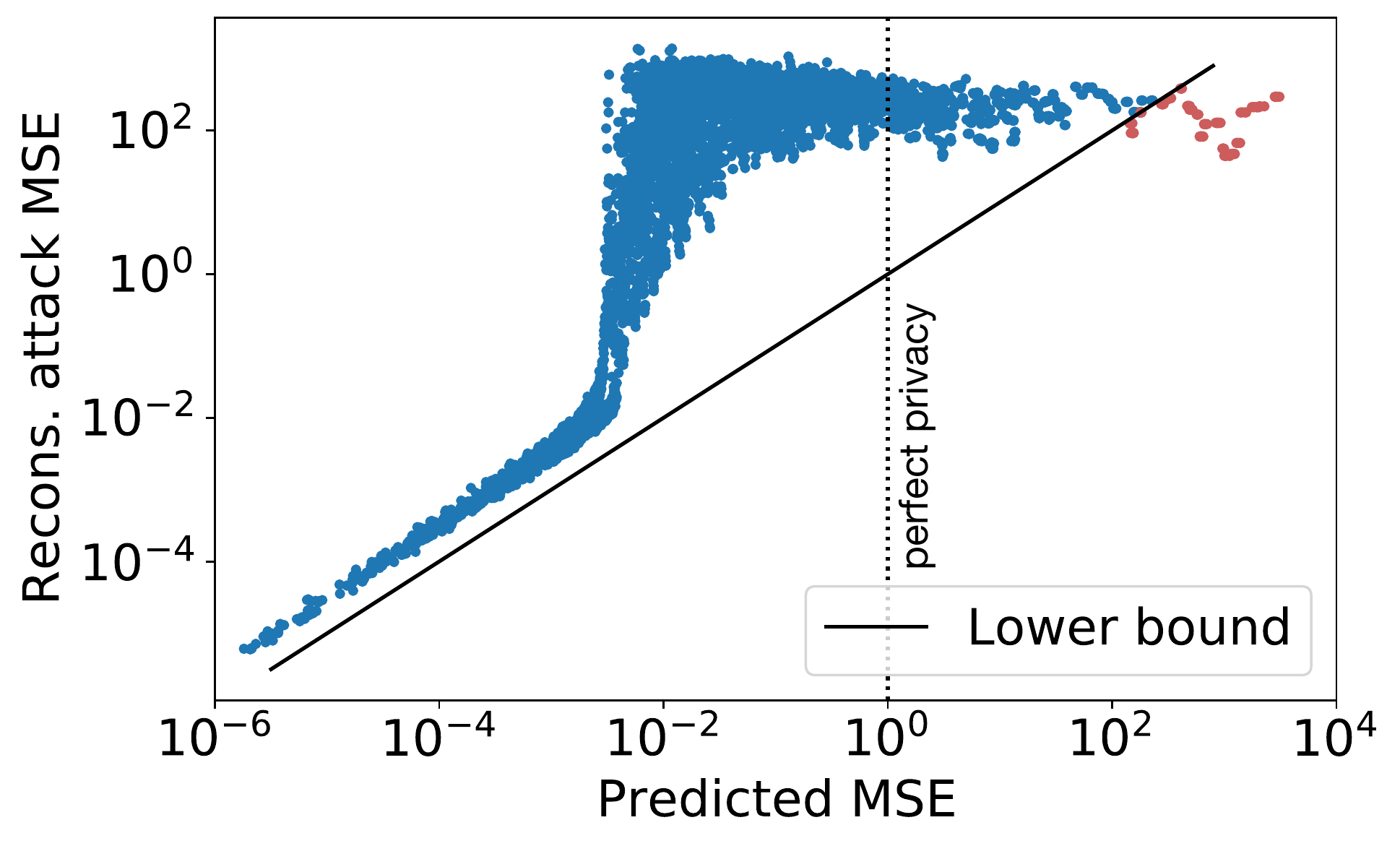}
\caption{Scatter plot of the MSE lower bound from FIL (x-axis) and the MSE realized by the GLM attack (y-axis; \citet{balle2022reconstructing}). The MSE lower bound predicted by \autoref{thm:fil_bound} is highly indicative of the sample's vulnerability to the GLM attack.}
\label{fig:recons_mse}
\end{figure}

\begin{figure*}[t!]
\centering
\includegraphics[width=\linewidth]{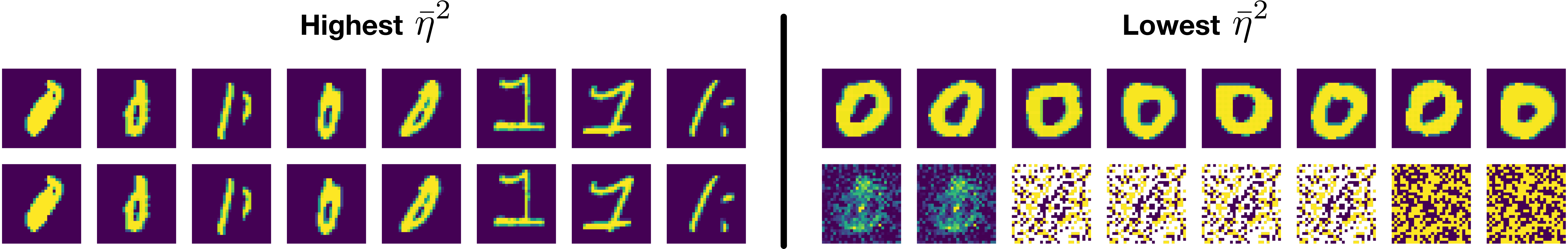}
\caption{Training samples (\textbf{top row}) and their reconstructions (\textbf{bottom row}) by the GLM attack. Samples are sorted in decreasing order of the dFIL $\bar{\eta}^2$. Samples with high dFIL can be reconstructed perfectly, while ones with low dFIL are protected against the GLM attack.}
\label{fig:recons_samples}
\end{figure*}

\begin{figure*}[t!]
\centering
\begin{subfigure}{.49\textwidth}
  \centering
  \includegraphics[width=\linewidth]{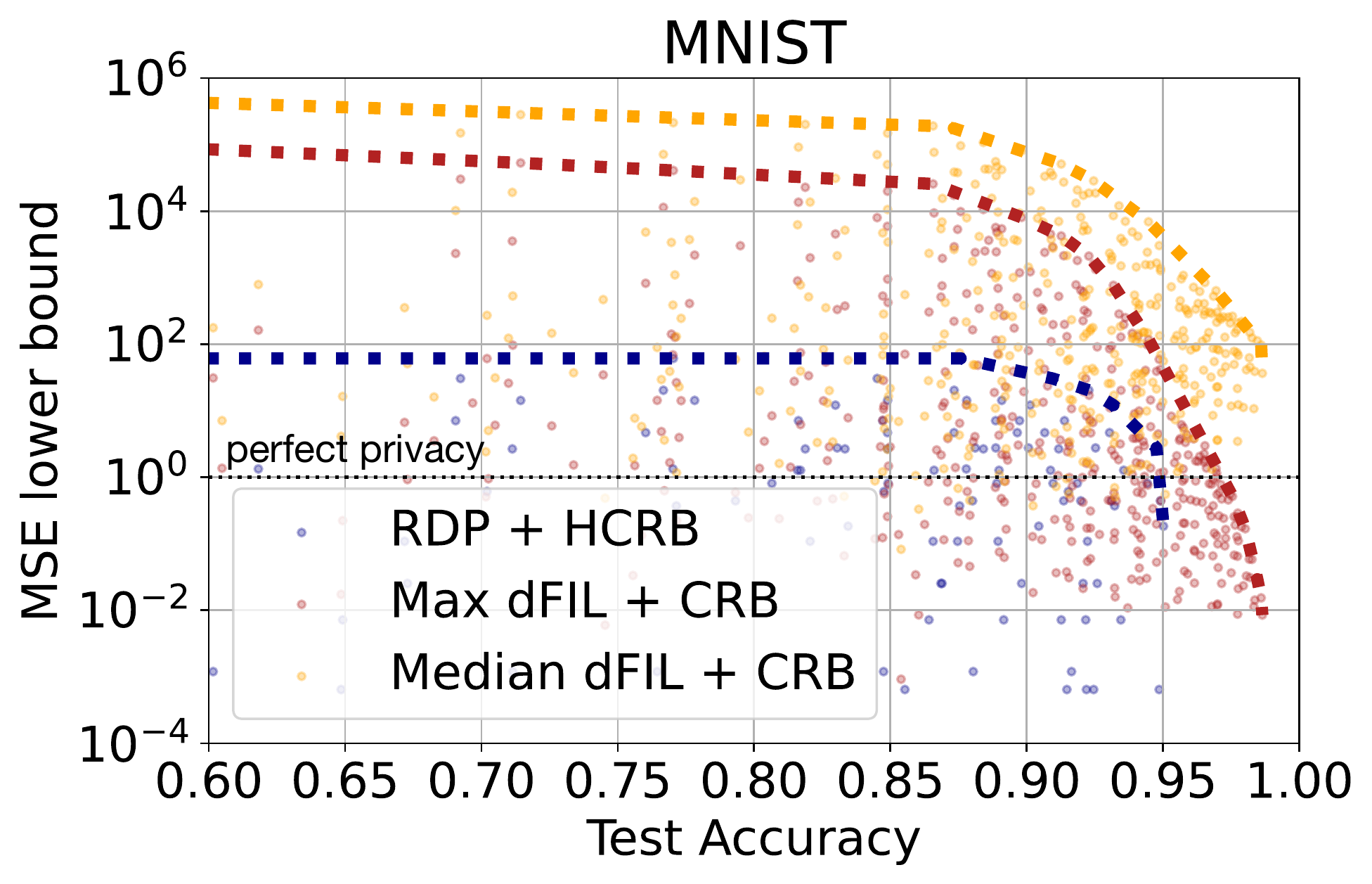}
\end{subfigure}
\hfill
\begin{subfigure}{.49\textwidth}
  \centering
  \includegraphics[width=\linewidth]{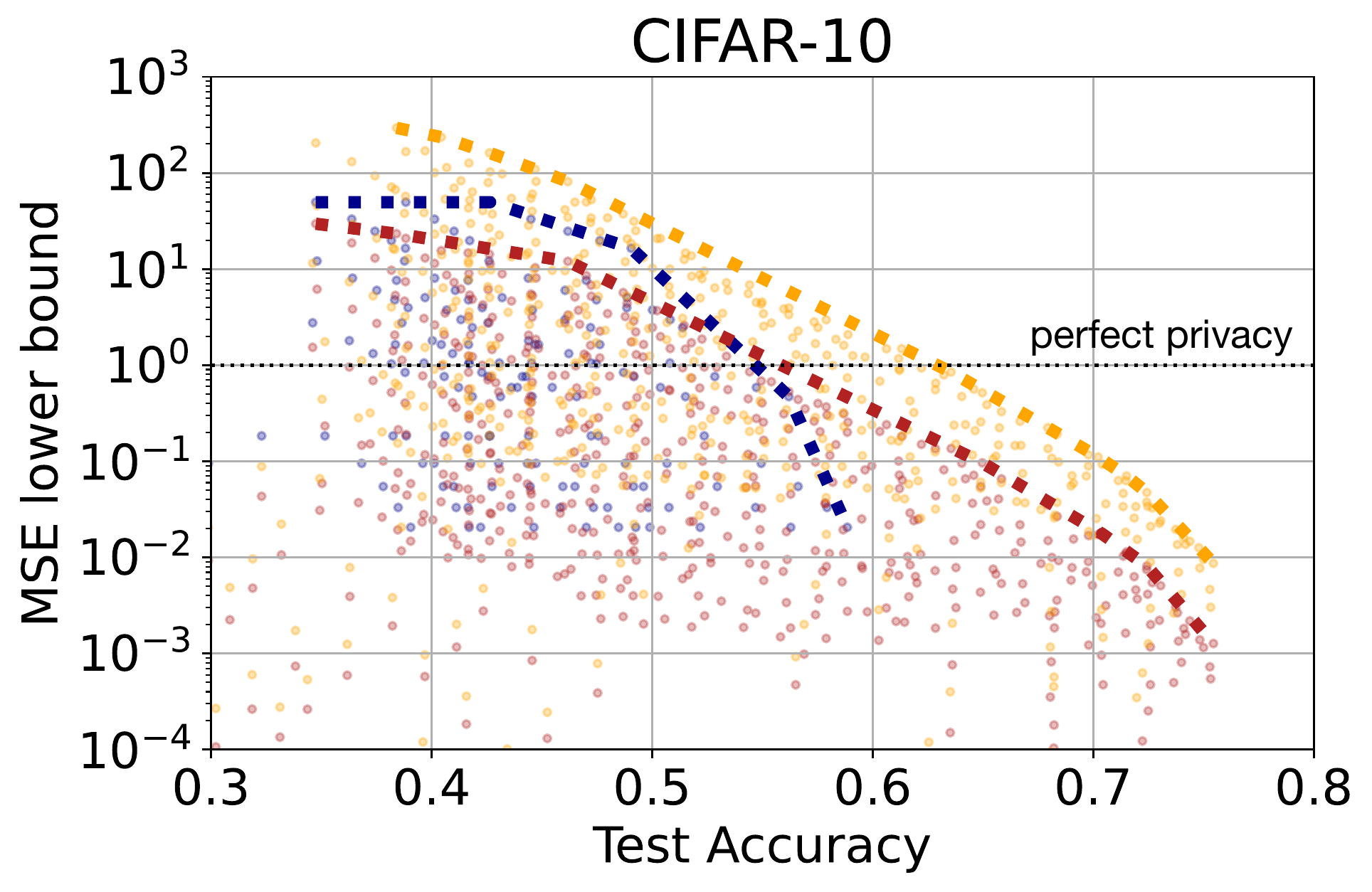}
\end{subfigure}
\caption{Comparison of MSE lower bounds from RDP and FIL. Dashed line shows the optimal privacy-utility trade-off across all searched hyperparameters. The maximum dFIL across the dataset gives a better MSE lower bound compared to the RDP bound in most settings.}
\label{fig:mse_comparison}
\end{figure*}

\subsection{Lower Bounding Reconstruction Attack}
\label{sec:reconstruction}

\citet{balle2022reconstructing} proposed a strong data reconstruction attack against generalized linear models (GLMs). We strengthen the attack by providing the label $y$ of the target sample $\bz = (\bx, y)$ to the adversary in addition to all other samples in $\calD_\text{train}$.
To evaluate the GLM attack, we train a private model using output perturbation with $\lambda=10^{-2}$ and $\sigma=10^{-5}$, and apply the GLM attack to reconstruct each sample in the training set. We repeat this process $10,000$ times and compute the expected MSE across the trials. The noise parameter $\sigma$ is intentionally set to be very small to enable data reconstruction on some vulnerable samples. Under this setting, the model is $(2,\epsilon)$-RDP with $\epsilon = 2.5 \times 10^6$, which is too large to provide any meaningful privacy guarantee.

\paragraph{Result.} \autoref{fig:recons_mse} shows the scatter plot of MSE lower bounds predicted by the dFIL bound (x-axis) vs. the realized expected MSE of the GLM attack (y-axis). The solid line shows the cut-off for the dFIL lower bound, hence all points should be above the solid line if the dFIL bound holds. We see that this is indeed the case for the majority of samples: lower predicted MSE corresponds to lower realized MSE, and there is a close correlation between the two values especially at the lower end.
We observe that some samples (highlighted in red) violate the dFIL lower bound. One explanation is that the GLM attack incurs a high bias when the sample is hard to reconstruct, hence the unbiased bound in \autoref{thm:fil_bound} fails to hold for those samples. Nevertheless, we see that for all samples with $\texttt{MSE} \leq 1$ (to the left of the perfect privacy line), the dFIL bound does provide a meaningful semantic guarantee against the GLM attack.

\paragraph{Reconstructed samples.} \autoref{fig:recons_samples} shows selected training samples (top row) and their reconstructions (bottom row). Samples are sorted in decreasing order of dFIL $\bar{\eta}^2$ and only the top- and bottom-8 are shown. For samples with the highest dFIL (\emph{i.e.}, lowest MSE bounds), the GLM attack successfully reconstructs the sample, while the attack fails for samples with the lowest dFIL. 

\begin{figure*}[t!]
\centering
\includegraphics[width=\linewidth]{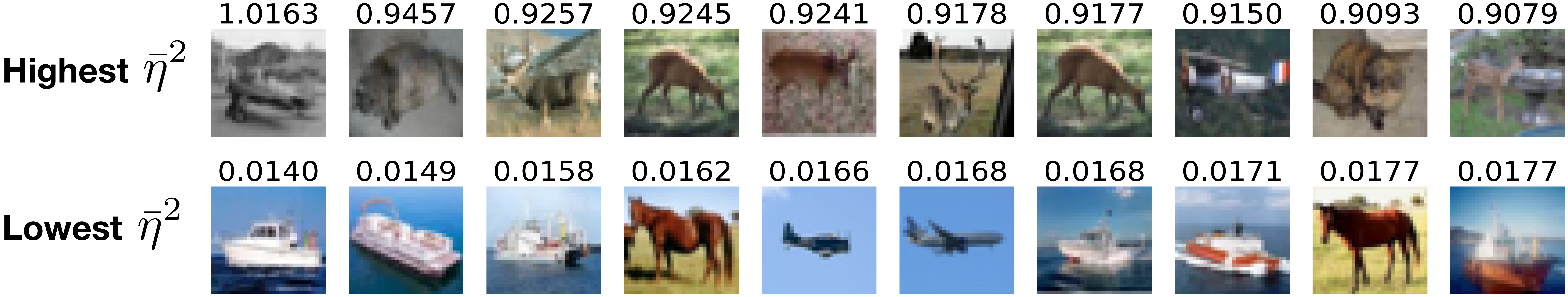}
\caption{CIFAR-10 training samples with the highest and lowest dFIL values $\bar{\eta}$.}
\label{fig:cifar10_samples}
\end{figure*}

\subsection{Neural Networks}
\label{sec:neural_network}

Finally, we compare MSE lower bounds for RDP and FIL accounting for the private SGD learner. We train two distinct convolutional networks\footnote{We adapt the networks used in \citet{papernot2020tempered}; see \autoref{sec:additional_details} for details.} on the full 10-digit MNIST~\cite{lecun1998gradient} dataset and the CIFAR-10~\cite{krizhevsky2009learning} dataset. The learner has several hyperparameters, and we exhaustively evaluate on all hyperparameter settings via grid search; see \autoref{sec:additional_details} for details. Similar to the experiment in \autoref{sec:logistic}, we treat the label as public and compute MSE bounds for reconstructing the input $\bx$.

\paragraph{Privacy accounting.} For RDP accounting, we apply the subsampling bound in \citet{mironov2019r}. For FIL accounting we use Algorithm \ref{alg:fil_sgd}, and estimate $\mathrm{Tr}(\calI_{\tilde{\bg}_t}(\bz))$ by sampling $50$ coordinates randomly every iteration (see \autoref{eq:fim_trace}). Failure probability for the subsampling bound in \autoref{eq:tight_subsampling_bound} is set to $\delta < 10^{-5}$. Each training run is repeated $10$ times to give a Monte-Carlo estimate for dFIL.

\paragraph{Result.} \autoref{fig:mse_comparison} shows the MSE lower bounds from RDP and dFIL on MNIST (left) and CIFAR-10 (right). Each point in the scatter plot corresponds to a single hyperparameter configuration, where we show the test accuracy on the x-axis and the MSE lower bound on the y-axis. In addition, we show the Pareto frontier using the dashed line, which indicates the optimal privacy-utility trade-off found by the grid search. In both plots, the RDP bound (shown in blue) gives a meaningful MSE lower bound, where the model can attain a reasonable accuracy ($95\%$ for MNIST and $55\%$ for CIFAR-10) before crossing the perfect privacy threshold.

The dFIL bound paints a more optimistic picture: For the same private mechanism, the maximum dFIL across the training set (shown in red) combined with \autoref{thm:fil_bound} gives an MSE lower bound that is orders of magnitude higher than the RDP bound at higher accuracies. On MNIST, the model can attain $97\%$ test accuracy before crossing the perfect privacy threshold. On CIFAR-10, although the dFIL bound crosses the perfect privacy threshold at approximately the same accuracy as the RDP bound, the bound deteriorates much more gradually, giving a non-negligible privacy guarantee of $\texttt{MSE} \geq 0.1$ at test accuracy $64\%$. Moreover, the median dFIL (shown in orange) indicates that even at high levels of accuracy, the median MSE lower bound across the dataset remains relatively high, hence most training samples are still safe from reconstruction attacks.

\paragraph{CIFAR-10 samples.} \autoref{fig:cifar10_samples} shows CIFAR-10 training samples with the highest and lowest privacy leakage according to dFIL ($\bar{\eta}$; shown above each image) for a ConvNet model trained privately with $T=5000, \sigma=0.5, \rho=0.1$ and $C=1$. Qualitatively, samples with low privacy leakage (bottom row) are typical images for their class and are easy to recognize, while samples with high privacy leakage (top row) are difficult to classify correctly even for humans.

\section{Discussion}
\label{sec:discussion}

We presented a formal framework for analyzing data reconstruction attacks, and proved two novel lower bounds on the MSE of reconstructions for private learners using RDP and FIL accounting.
Our work also extended FIL accounting to private SGD, and we showed that the resulting MSE lower bounds drastically improve upon those derived from RDP.
We hope that future research can build upon our work to develop more comprehensive analytical tools for evaluating the privacy risks of learning algorithms.

Concurrent work by \citet{balle2022reconstructing} offered a Bayesian approach to bounding reconstruction attacks. Their formulation lower bounds the reconstruction error of an adversary in terms of the error of an adversary with only access to the data distribution prior and the DP parameter $\epsilon$. In contrast, our bounds characterize the prior of an adversary in terms of sensitivity of their estimate to the training data, with a priorless adversary being unbiased and hence the most sensitive. Interestingly, the Bayesian extension~\citep{van2004detection} of the Cram\'{e}r-Rao bound used in our result offers a similar interpretation as \citet{balle2022reconstructing}, and we hope to unite these two interpretations in future work.

\paragraph{Limitations.} Our work presents several opportunities for further improvement.

1. The RDP bound only applies natively for order $\alpha=2$. To extend it to general order $\alpha$, one promising direction is to use minimax bounds~\cite{rigollet2015high} to establish a relationship between parameter estimation and hypothesis testing, which enables the use of general DP accountants to derive MSE lower bounds for DRAs.

2. Computing the FIM requires evaluating a second-order derivative, which is much more expensive (in terms of both compute and memory) to derive than simpler quantities such as R\'{e}nyi divergence. Improvements in this aspect can enable the use of the FIL accountant in larger models.

3. We empirically evaluated both the RDP and FIL lower bounds only for unbiased adversaries. In practice, data reconstruction attacks can leverage informative priors such as the smoothness prior for images, and hence are unlikely to be truly unbiased. Further investigation into MSE lower bounds for biased estimators can enable more robust semantic guarantees against data reconstruction attacks.

\section*{Acknowledgements}
We thank Mark Tygert and Sen Yuan for helping us realize the connection between RDP and FIL, and Alban Desmaison and Horace He for assistance with the code.

\bibliography{citations}
\bibliographystyle{icml2022}

\newpage
\appendix
\onecolumn
\section{Cram\'{e}r-Rao and Hammersley-Chapman-Robbins Bounds}
\label{sec:crb_hcrb}

Below, we state the Cram\'{e}r-Rao Bound (CRB) and the Hammersley-Chapman-Robbins Bound (HCRB)---two cornerstone results in statistics that we leverage for proving our main results.

\setcounter{theorem}{0}
\renewcommand{\thetheorem}{\Alph{section}.\arabic{theorem}}

\begin{theorem}[Hammersley-Chapman-Robbins Bound]
\label{thm:hcrb}
Let $\theta \in \Theta \subseteq \mathbb{R}^d$ be a parameter vector and let $U$ be a random variable whose density function $p(\bu; \theta)$ is parameterized by $\theta$ and is positive for all $\bu \in \mathbb{R}^p$ and $\theta \in \Theta$. Let $\hat{\theta}(U)$ be an estimator of $\theta$ whose expectation is $\mu(\theta) := \mathbb{E}_{U \sim p(\bu; \theta)}[\hat{\theta}(U)]$. Then for any $i \in \{1,\ldots,d\}$ and $\Delta \in \mathbb{R}$, we have that:
\begin{align*}
    \var\left(\hat{\theta}(U)_i\right) \geq \frac{\quad (\mu(\theta + \Delta \be_i)_i - \mu(\theta)_i)^2}{\chi^2(p(\bu; \theta + \Delta \be_i)~||~p(\bu; \theta))},
\end{align*}
where $\be_i$ is the standard basis vector with $i$th coordinate equal to 1, and $\chi^2(P~||~Q) = \mathbb{E}_Q[(P/Q - 1)^2]$ is the chi-squared divergence between $P$ and $Q$.
\end{theorem}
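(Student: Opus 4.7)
The standard route to HCRB is a single application of Cauchy--Schwarz, where the trick is to choose the right pair of random variables. Write $\theta' = \theta + \Delta \be_i$ and, with $U \sim p(\cdot; \theta)$, define the ``shifted likelihood ratio''
\begin{equation*}
L(U) \;=\; \frac{p(U;\theta')}{p(U;\theta)} - 1.
\end{equation*}
The positivity assumption on $p$ is precisely what makes this ratio well-defined. The plan is to compute $\E[L(U)]$, $\var(L(U))$, and $\cov(\hat{\theta}(U)_i, L(U))$ under $U \sim p(\cdot;\theta)$, then feed the last two into Cauchy--Schwarz.

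\textbf{Step 1: moments of $L$.} A direct integration gives $\E_{p(\cdot;\theta)}[L(U)] = \int p(u;\theta')\,du - 1 = 0$ since densities integrate to one. Consequently $\var(L(U)) = \E[L(U)^2] = \E_{p(\cdot;\theta)}[(p(U;\theta')/p(U;\theta) - 1)^2]$, which is exactly the definition of $\chi^2(p(\cdot;\theta')\,\|\,p(\cdot;\theta))$.

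\textbf{Step 2: the covariance equals the shift in $\mu_i$.} Because $\E[L(U)] = 0$, we have $\cov(\hat{\theta}(U)_i, L(U)) = \E[\hat{\theta}(U)_i L(U)]$, and expanding against $p(u;\theta)$ cancels the denominator of $L$:
\begin{equation*}
\E_{p(\cdot;\theta)}[\hat{\theta}(U)_i L(U)] \;=\; \int \hat{\theta}(u)_i\bigl(p(u;\theta') - p(u;\theta)\bigr)\,du \;=\; \mu(\theta')_i - \mu(\theta)_i.
\end{equation*}

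\textbf{Step 3: Cauchy--Schwarz.} Applying $\cov(X,Y)^2 \leq \var(X)\var(Y)$ with $X = \hat{\theta}(U)_i$ and $Y = L(U)$, and substituting the two previous identities, yields
\begin{equation*}
(\mu(\theta')_i - \mu(\theta)_i)^2 \;\leq\; \var(\hat{\theta}(U)_i)\cdot \chi^2\bigl(p(\cdot;\theta')\,\|\,p(\cdot;\theta)\bigr).
\end{equation*}
Rearranging (and observing the bound is vacuous when the $\chi^2$ divergence vanishes, in which case both sides are zero) gives exactly the HCRB as stated.

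\textbf{Main obstacle.} There is no real analytical obstacle — the only subtlety is regularity: I need the interchange of integration and the finiteness of $\E[\hat{\theta}(U)_i L(U)]$ to be justified, which follows from positivity of $p$ plus finiteness of $\var(\hat{\theta}(U)_i)$ and of $\chi^2(p(\cdot;\theta')\,\|\,p(\cdot;\theta))$ (otherwise the inequality is trivial). Beyond that, the proof is essentially a one-line Cauchy--Schwarz once $L(U)$ is introduced; the ``creative'' step is recognizing that the chi-squared divergence is the natural variance of the shifted likelihood ratio, which is why the HCRB avoids the differentiability assumptions required for the Cram\'er--Rao bound.
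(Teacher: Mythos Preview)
Your proposal is correct and matches the paper's own proof essentially line for line: the paper also rewrites $\mu(\theta')_i - \mu(\theta)_i$ as $\E_{p(\cdot;\theta)}\big[(\hat{\theta}(U)_i - \mu(\theta)_i)\,(p(U;\theta')/p(U;\theta) - 1)\big]$ and then applies Cauchy--Schwarz, identifying the two factors with $\var(\hat{\theta}(U)_i)$ and the $\chi^2$ divergence. Your presentation via the shifted likelihood ratio $L(U)$ and the covariance form of Cauchy--Schwarz is exactly the same argument with slightly more explicit packaging.
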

\begin{proof}
First note that
\begin{align*}
    \mu(\theta + \Delta \be_i) - \mu(\theta) &= \mathbb{E}_{U \sim p(\bu; \theta + \Delta \be_i)}[\hat{\theta}(U) - \mu(\theta)] - \mathbb{E}_{U \sim p(\bu; \theta)}[\hat{\theta}(U) - \mu(\theta)] \\
    &= \mathbb{E}_{U \sim p(\bu; \theta)}\left[ (\hat{\theta}(U) - \mu(\theta)) \frac{p(\bu; \theta + \Delta \be_i) - p(\bu; \theta)}{p(\bu; \theta)} \right].
\end{align*}
Squaring and applying Cauchy-Schwarz gives
\begin{align*}
    (\mu(\theta + \Delta \be_i)_i - \mu(\theta)_i)^2 &\leq \mathbb{E}_{U \sim p(\bu; \theta)}\left[ (\hat{\theta}(U)_i - \mu(\theta)_i)^2 \right] \mathbb{E}_{U \sim p(\bu; \theta)}\left[ \left( \frac{p(\bu; \theta + \Delta \be_i) - p(\bu; \theta)}{p(\bu; \theta)} \right)^2 \right] \\
    &= \var(\hat{\theta}(U)_i) \chi^2(p(\bu; \theta + \Delta \be_i)~||~p(\bu; \theta)),
\end{align*}
as desired.
\end{proof}

\begin{theorem}[Cram\'{e}r-Rao Bound]
\label{thm:crb}
Assume the setup of \autoref{thm:hcrb}, and additionally that the log density function $\log p(\bu;\theta)$ is twice differentiable and satisfies the following regularity condition: $\mathbb{E}[\partial \log p(\bu; \theta) / \partial \theta] = 0$ for all $\theta$. Let $\calI_U(\theta)$ be the Fisher information matrix of $U$ for the parameter vector $\theta$. Then the estimator $\hat{\theta}(U)$ satisfies: $$\var\left( \hat{\theta}(U) \right) \succeq J_\mu(\theta) \calI_U(\theta)^{-1} J_\mu(\theta)^\top,$$ where $J_\mu(\theta)$ is the Jacobian of $\mu$ with respect to $\theta$.
\end{theorem}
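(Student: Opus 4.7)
The plan is to follow the standard covariance inequality derivation of the multivariate Cramér-Rao bound, generalized to handle the case where $\hat{\theta}(U)$ is biased (so $\mu(\theta)\neq\theta$ in general). First I would introduce the score random vector $s(\theta) := \nabla_\theta \log p(U;\theta)\in\mathbb{R}^d$. The stated regularity condition $\E[\partial \log p(\bu;\theta)/\partial \theta] = 0$ gives immediately that $\E[s(\theta)] = 0$, and by definition $\calI_U(\theta) = \E[s(\theta)s(\theta)^\top]$ is the covariance of $s(\theta)$.

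Next I would relate $J_\mu(\theta)$ to the cross-covariance between $\hat{\theta}(U)$ and $s(\theta)$. Starting from $\mu(\theta) = \int \hat{\theta}(\bu)\, p(\bu;\theta)\, d\bu$ and differentiating under the integral sign (justified by the implicit regularity assumptions on $p$), the identity $\nabla_\theta p(\bu;\theta) = p(\bu;\theta)\, \nabla_\theta \log p(\bu;\theta)$ yields $J_\mu(\theta) = \E[\hat{\theta}(U)\, s(\theta)^\top]$. Since $\E[s(\theta)] = 0$, this equals the cross-covariance $\cov(\hat{\theta}(U),\, s(\theta))$.

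I would then assemble the joint covariance matrix of the stacked vector $(\hat{\theta}(U),\, s(\theta))$:
\[
\Sigma \;=\; \begin{pmatrix} \var(\hat{\theta}(U)) & J_\mu(\theta) \\ J_\mu(\theta)^\top & \calI_U(\theta) \end{pmatrix} \;\succeq\; 0,
\]
which is positive semi-definite as any covariance matrix must be. Assuming $\calI_U(\theta)$ is invertible (the standard non-degeneracy assumption; otherwise one passes to the Moore--Penrose pseudoinverse and restricts to the range of $\calI_U(\theta)$), the Schur complement characterization of positive semi-definite block matrices gives
\[
\var(\hat{\theta}(U)) - J_\mu(\theta)\,\calI_U(\theta)^{-1}\, J_\mu(\theta)^\top \;\succeq\; 0,
\]
which is exactly the conclusion.

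The main obstacle is the justification of differentiating $\mu(\theta)$ under the integral sign; the theorem invokes ``regularity conditions'' but lists only the zero-mean score condition, so I would be explicit about also needing either dominated convergence or the standard Lehmann-style regularity to swap $\nabla_\theta$ and $\int$. A secondary issue is the invertibility of $\calI_U(\theta)$, which I would handle by either adding it as an assumption or, more carefully, replacing the inverse with a pseudoinverse and noting that $J_\mu(\theta)$ must then have rows in the range of $\calI_U(\theta)$ for the bound to be finite; this does not affect the paper's downstream use, where $\calI_h(\bz)$ is treated as invertible in Theorem~\ref{thm:fil_bound}.
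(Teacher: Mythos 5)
Your proposal is correct: the covariance--Schur-complement argument (stack $\hat{\theta}(U)$ with the score $s(\theta)$, show the cross-covariance block equals $J_\mu(\theta)$, and take the Schur complement of the PSD joint covariance) is the standard derivation of the biased multivariate Cram\'{e}r--Rao bound, and your handling of the two genuine gaps in the statement---differentiation under the integral sign and invertibility of $\calI_U(\theta)$---is appropriate. There is nothing to compare against in the paper itself: \autoref{thm:crb} is stated without proof as a classical result (the text cites Kay, 1993), so yours stands alone. One small point worth making explicit: the paper defines the Fisher information matrix as the negative expected Hessian $-\E[\nabla^2_\theta \log p(\bu;\theta)]$, whereas your argument needs the score outer-product form $\E[s(\theta)s(\theta)^\top]$; these coincide precisely under the interchange-of-differentiation-and-integration regularity you already flag, so you should state that equivalence (or simply take the outer-product form as the definition) rather than leave it implicit.
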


\section{Proofs}
\label{sec:proofs}

We present proofs of theoretical results from the main text.

\setcounter{theorem}{0}
\renewcommand{\thetheorem}{\arabic{theorem}}

\begin{theorem}
Let $\bz \in \calZ \subseteq \mathbb{R}^d$ be a sample in the data space $\calZ$, and let $\texttt{Att}$ be a reconstruction attack that outputs $\hat{\bz}(h)$ upon observing the trained model $h \leftarrow \calA(\calD_\text{train})$, with expectation $\mu(\bz) = \mathbb{E}_{\calA(\calD_\text{train})}[\hat{\bz}(h)]$. If $\calA$ is a $(2,\epsilon)$-RDP learning algorithm then:
\begin{equation*}
    \mathbb{E}\left[\|\hat{\bz}(h) - \bz\|_2^2 / d\right] \geq \underbrace{\frac{\sum_{i=1}^d \gamma_i^2 \mathrm{diam}_i(\calZ)^2/4d}{e^\epsilon - 1}}_\text{variance} + \underbrace{\frac{\|\mu(\bz) - \bz\|_2^2}{d}}_\text{squared bias},
\end{equation*}
where $\gamma_i = \inf_{\bz \in \calZ} |\partial \mu(\bz)_i / \partial \bz_i|$ and
$$\mathrm{diam}_i(\calZ) = \sup_{\bz, \bz' \in \calZ : \bz_j = \bz'_j \forall j \neq i} |\bz_i - \bz'_i|$$
is the diameter of $\calZ$ in the $i$-th dimension. In particular, if $\hat{\bz}(h)$ is unbiased then:
\begin{equation*}
    \mathbb{E}[\|\hat{\bz}(h) - \bz\|_2^2 / d] \geq \frac{\sum_{i=1}^d \mathrm{diam}_i(\calZ)^2/4d}{e^\epsilon - 1}.
\end{equation*}
\end{theorem}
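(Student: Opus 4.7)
The plan is to apply the Hammersley--Chapman--Robbins bound (Theorem A.1) coordinatewise, treating the training sample $\bz$ as the parameter governing the distribution of $h \leftarrow \calA(\calD \cup \{\bz\})$, and then use the $(2,\epsilon)$-RDP hypothesis to uniformly bound the $\chi^2$ denominator. I begin with the bias-variance decomposition
$$\E[\|\hat{\bz}(h) - \bz\|_2^2/d] = \frac{1}{d}\sum_{i=1}^d \var(\hat{\bz}(h)_i) + \frac{\|\mu(\bz)-\bz\|_2^2}{d},$$
which immediately accounts for the squared-bias term; it remains to lower-bound each coordinatewise variance $\var(\hat{\bz}(h)_i)$.

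Applying Theorem A.1 with estimator $\hat{\bz}(h)_i$ and parameter $\bz$, for any $\Delta$ with $\bz + \Delta \be_i \in \calZ$,
$$\var(\hat{\bz}(h)_i) \geq \frac{(\mu(\bz+\Delta\be_i)_i - \mu(\bz)_i)^2}{\chi^2\!\left(\calA(\calD\cup\{\bz+\Delta\be_i\})\,\|\,\calA(\calD\cup\{\bz\})\right)}.$$
I bound the denominator via the identity $D_2(P\|Q) = \log(1+\chi^2(P\|Q))$, which together with $(2,\epsilon)$-RDP gives $\chi^2 \leq e^\epsilon - 1$, uniformly in $\bz$ and $\Delta$. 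For the numerator, if $\gamma_i = 0$ the coordinatewise bound is trivially zero; otherwise $|\partial \mu(\bz')_i / \partial \bz'_i| \geq \gamma_i > 0$ throughout $\calZ$, so by continuity this partial derivative has constant sign, and the mean value theorem yields $|\mu(\bz+\Delta\be_i)_i - \mu(\bz)_i| \geq \gamma_i |\Delta|$.

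Since HCRB holds for every admissible $\Delta$, I choose the largest one along the $i$-th fiber through $\bz$; on a product-structured data space, either the upward or downward perturbation admits $|\Delta| \geq \mathrm{diam}_i(\calZ)/2$. Substituting yields $\var(\hat{\bz}(h)_i) \geq \gamma_i^2 \mathrm{diam}_i(\calZ)^2 / (4(e^\epsilon-1))$; summing over $i$, dividing by $d$, and recombining with the squared-bias term recovers the main inequality. The unbiased special case follows since $\mu(\bz)=\bz$ forces $\gamma_i = 1$. The main obstacle is the numerator step: converting the pointwise Jacobian lower bound $\gamma_i$ into a finite-difference lower bound requires constancy of the sign of $\partial \mu(\bz)_i/\partial \bz_i$, and the optimization over $\Delta$ implicitly uses that the $i$-th fiber through $\bz$ realizes at least half of $\mathrm{diam}_i(\calZ)$. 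Everything else is a direct substitution into the HCRB.
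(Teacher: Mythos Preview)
Your proposal is correct and follows essentially the same route as the paper's proof: bias--variance decomposition, coordinatewise HCRB, the identity $D_2(P\|Q)=\log(1+\chi^2(P\|Q))$ to bound the denominator by $e^\epsilon-1$, the mean value theorem for the numerator, and maximization over $\Delta$ to produce the $\mathrm{diam}_i(\calZ)^2/4$ factor. You are in fact slightly more careful than the paper in flagging the implicit assumptions behind the $\Delta$-optimization and the MVT step (though note that sign constancy is not actually needed: MVT directly gives $|\mu(\bz+\Delta\be_i)_i-\mu(\bz)_i|=|\Delta|\cdot|\partial_i\mu(\bz^*)_i|\geq\gamma_i|\Delta|$).
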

\begin{proof}
Let $p(h; \bz')$ be the density of $h \leftarrow \calA(\calD_\text{train})$ when $\calD_\text{train} = \calD \cup \{\bz'\}$. We first invoke the well-known identity $D_2(P~||~Q) = \log(1 + \chi^2(P~||~Q))$, hence if $\calA$ is $(2,\epsilon)$-RDP then $\chi^2(p(h; \bz + \Delta \be_i)~||~p(h; \bz)) \leq e^\epsilon - 1$.
For each $i=1,\ldots,d$, we can apply bias-variance decomposition to get $(\hat{\bz}(h)_i - \bz_i)^2 = \var(\hat{\bz}(h)_i) + (\mu(\bz)_i - \bz_i)^2$.
Applying \autoref{thm:hcrb} to the variance term gives:
\begin{align*}
    \var(\hat{\bz}(h)_i) &\geq (\mu(\bz + \Delta \be_i)_i - \mu(\bz)_i)^2 / \chi^2(p(h; \bz + \Delta \be_i)~||~p(h; \bz)) \\
    &\geq (\mu(\bz + \Delta \be_i)_i - \mu(\bz)_i)^2 / (e^\epsilon - 1) \\
    &\geq \gamma_i^2 \Delta^2 / (e^\epsilon - 1),
\end{align*}
where the last inequality follows from the mean value theorem. Since this holds for any $\Delta$, we can maximize over $\{\Delta \in \mathbb{R} : \bz + \Delta \be_i \in \calZ\}$, which gives $\var(\hat{\bz}(h)_i) \geq \gamma_i^2 \mathrm{diam}_i(\calZ)^2/4(e^\epsilon - 1)$. Summing over $i=1,\ldots,d$ gives the desired bound.
If $\hat{\bz}(h)$ is unbiased then $\mu(\bz) = \bz$ and $\gamma_i = 1$ for all $i$, thus $\mathbb{E}[\|\hat{\bz}(h) - \bz\|_2^2 / d] \geq \frac{\sum_{i=1}^d \mathrm{diam}_i(\calZ)^2/4d}{e^\epsilon - 1}$.
\end{proof}

\begin{theorem}
Assume the setup of \autoref{thm:rdp_bound}, and additionally that the log density function $\log p_\calA(h | \zeta)$ satisfies the conditions in \autoref{thm:crb}. Then:
\begin{equation*}
\mathbb{E}[\|\hat{\bz}(h) - \bz\|_2^2/d] \geq \underbrace{\frac{\mathrm{Tr}(J_\mu(\bz) \calI_h(\bz)^{-1} J_\mu(\bz)^\top)}{d}}_\text{variance} + \underbrace{\frac{\|\mu(\bz) - \bz\|_2^2}{d}}_\text{squared bias}.
\end{equation*}
In particular, if $\hat{\bz}(h)$ is unbiased then:
\begin{equation*}
\mathbb{E}[\|\hat{\bz}(h) - \bz\|_2^2/d] \geq d/\mathrm{Tr}(\calI_h(\bz)) \geq 1/\eta^2.
\end{equation*}
\end{theorem}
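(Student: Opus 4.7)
The plan is to apply the Cram\'{e}r--Rao bound (Theorem A.2) pointwise to the adversary's estimator $\hat{\bz}(h)$, viewed as an estimator of the parameter $\bz$ from the single observation $h \sim \calA(\calD_\text{train})$, and then combine the resulting covariance inequality with a standard bias--variance decomposition of the MSE.

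First I would write $\mathbb{E}[\|\hat{\bz}(h) - \bz\|_2^2] = \mathrm{Tr}(\var(\hat{\bz}(h))) + \|\mu(\bz) - \bz\|_2^2$ by expanding $\hat{\bz}(h) - \bz = (\hat{\bz}(h) - \mu(\bz)) + (\mu(\bz) - \bz)$ and taking expectation. The cross term vanishes because $\mu(\bz)$ is the mean of $\hat{\bz}(h)$. Dividing by $d$ isolates the two summands that appear on the right-hand side of the theorem, so it remains only to bound $\mathrm{Tr}(\var(\hat{\bz}(h)))$ from below.

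Next I would invoke Theorem A.2 with $\theta := \bz$, $U := h$, and $p(\bu;\theta) := p_\calA(h|\zeta=\bz)$. The regularity assumption on $\log p_\calA(h|\zeta)$ is exactly what is needed. The conclusion of the CRB gives the PSD inequality $\var(\hat{\bz}(h)) \succeq J_\mu(\bz)\,\calI_h(\bz)^{-1}\,J_\mu(\bz)^\top$, and taking traces (which preserves the PSD order) yields the variance lower bound $\mathrm{Tr}(J_\mu(\bz)\,\calI_h(\bz)^{-1}\,J_\mu(\bz)^\top)$. Dividing by $d$ and adding the bias term completes the first claim.

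For the unbiased specialization, I would use $\mu(\bz) = \bz$ so that $J_\mu(\bz) = I_d$ and the bias term vanishes; the bound reduces to $\mathrm{Tr}(\calI_h(\bz)^{-1})/d$. The final chain $\mathrm{Tr}(\calI_h(\bz)^{-1})/d \geq 1/\mathrm{Tr}(\calI_h(\bz)/d)\cdot(1/1)$ follows from the AM--HM inequality applied to the eigenvalues $\lambda_1,\ldots,\lambda_d$ of $\calI_h(\bz)$: $\sum_i 1/\lambda_i \geq d^2/\sum_i \lambda_i$, so $\mathrm{Tr}(\calI_h(\bz)^{-1})/d \geq d/\mathrm{Tr}(\calI_h(\bz))$. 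Finally, since $\lambda_i \leq \|\calI_h(\bz)\|_2 = \eta^2$ for every $i$, we have $\mathrm{Tr}(\calI_h(\bz)) \leq d\eta^2$, which gives $d/\mathrm{Tr}(\calI_h(\bz)) \geq 1/\eta^2$, as required. There is no real obstacle here: the argument is essentially a one-line application of the CRB followed by two elementary eigenvalue inequalities; the only point requiring care is verifying that the author's FIM convention matches the one used in Theorem A.2 (so that $\calI_h(\bz)^{-1}$ appears in the right place) and that the regularity conditions genuinely hold for the mechanisms under consideration---issues that the statement already assumes away.
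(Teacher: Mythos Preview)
Your proposal is correct and essentially identical to the paper's proof: bias--variance decomposition plus the CRB for the general bound, then $J_\mu = I_d$, the eigenvalue inequality $\mathrm{Tr}(\calI_h^{-1}) \geq d^2/\mathrm{Tr}(\calI_h)$, and $\mathrm{Tr}(\calI_h) \leq d\eta^2$ for the unbiased case. The only cosmetic difference is that you name the eigenvalue step AM--HM while the paper calls it Cauchy--Schwarz, but these are the same inequality here.
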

\begin{proof}
The general bound for biased estimators follows directly from \autoref{thm:crb} and bias-variance decomposition of MSE. For the unbiased estimator bound, note that the Jacobian $J_\mu(\bz) = I_d$, so $$\mathbb{E}[\|\hat{\bz}(h) - \bz\|_2^2/d] \geq \mathrm{Tr}(\calI_h(\bz)^{-1})/d \geq d^2 \mathrm{Tr}(\calI_h(\bz))^{-1} / d = d / \mathrm{Tr}(\calI_h(\bz)),$$ where the second inequality follows from Cauchy-Schwarz. Finally, $\mathrm{Tr}(\calI_h(\bz)) = \sum_{i=1}^d \be_i^\top \calI_h(\bz) \be_i \leq \sum_{i=1}^d \eta^2 \| \be_i \|_2^2 = d \eta^2$, and the result follows.
\end{proof}

\begin{theorem}
Let $\bw_0$ be the model's initial parameters, which is drawn independently of $\calD_\text{train}$. Let $T$ be the total number of iterations of SGD and let $\calB_1,\ldots,\calB_T$ be a fixed sequence of batches from $\calD_\text{train}$. Then:
\begin{equation*}
    \calI_{\bw_0,\bar{\bg}_1,\ldots,\bar{\bg}_T}(\bz) \preceq \mathbb{E}_{\bw_0,\bar{\bg}_1,\ldots,\bar{\bg}_T}\left[ \sum_{t=1}^T \calI_{\bar{\bg}_t}(\bz | \bw_0,\bar{\bg}_1,\ldots,\bar{\bg}_{t-1})\right],
\end{equation*}
where $U \preceq V$ means that $V - U$ is positive semi-definite.
\end{theorem}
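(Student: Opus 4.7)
The plan is to establish the bound by invoking the chain rule for Fisher information on the joint distribution of the entire training trajectory $(\bw_0, \bar{\bg}_1, \ldots, \bar{\bg}_T)$ parameterized by $\bz$. The key observation is that these variables have a sequential Markov-style factorization,
\begin{equation*}
p(\bw_0, \bar{\bg}_1, \ldots, \bar{\bg}_T \mid \zeta) = p(\bw_0) \prod_{t=1}^T p(\bar{\bg}_t \mid \bw_0, \bar{\bg}_1, \ldots, \bar{\bg}_{t-1}, \zeta),
\end{equation*}
where the factor $p(\bw_0)$ carries no dependence on $\zeta$ since the initial parameters are drawn independently of $\calD_\text{train}$, and each conditional $p(\bar{\bg}_t \mid \bw_0, \bar{\bg}_1, \ldots, \bar{\bg}_{t-1}, \zeta)$ is the density of the $t$-th perturbed gradient, whose $\zeta$-dependence enters only through the clipped gradient of the loss at the realized parameters $\bw_{t-1}$.

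Taking the logarithm turns the product into a sum, and then applying $-\nabla_\zeta^2$ at $\zeta = \bz$ zeroes out the $p(\bw_0)$ contribution and leaves a sum over $t$ of negative Hessians of the conditional log-densities. I would then take expectation over the full joint and apply the tower property: conditioning on $\bw_0, \bar{\bg}_1, \ldots, \bar{\bg}_{t-1}$, the inner expectation of the $t$-th summand over $\bar{\bg}_t$ is by definition $\calI_{\bar{\bg}_t}(\bz \mid \bw_0, \bar{\bg}_1, \ldots, \bar{\bg}_{t-1})$, and the outer expectation over the past matches the expression appearing in the theorem statement (further integration over $\bar{\bg}_{t+1}, \ldots, \bar{\bg}_T$ is a no-op since those variables do not appear in the integrand of the $t$-th term). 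Pulling the sum outside yields the claimed bound in fact with equality; the $\preceq$ in the statement is therefore comfortably satisfied, and is presumably written this way to facilitate chaining with the data-processing inequality $\calI_h(\bz) \preceq \calI_{\bw_0, \bar{\bg}_1, \ldots, \bar{\bg}_T}(\bz)$ used to obtain the main-text formulation in terms of $\calI_h$.

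The main obstacle, as is standard in Fisher-information arguments, is justifying the interchange of differentiation and expectation so that the negative-Hessian representation $-\E[\nabla_\zeta^2 \log p(\cdot \mid \zeta)]$ coincides with the defining Fisher information matrix for each conditional. The required regularity conditions — sufficient smoothness of the conditional density in $\zeta$ together with dominated-convergence-type bounds on the derivatives — are precisely what motivate the GELU-smoothed clipping and tanh activations introduced in Section 5.2, so in the concrete setting of Algorithm 1 these hypotheses are straightforwardly verified for the Gaussian conditional densities induced by the per-step perturbation mechanism.
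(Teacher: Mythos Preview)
Your proposal is correct and follows essentially the same approach as the paper: both arguments hinge on the chain rule for Fisher information applied to the joint trajectory $(\bw_0,\bar{\bg}_1,\ldots,\bar{\bg}_T)$, with $\calI_{\bw_0}(\bz)=0$ by independence, yielding equality (the paper cites the chain rule as a known result from \cite{zamir1998proof}, whereas you spell out its proof via the log-factorization and tower property). You also correctly anticipate that the $\preceq$ is slack here and is used only to chain with the post-processing inequality $\calI_h(\bz)\preceq\calI_{\bw_0,\bar{\bg}_1,\ldots,\bar{\bg}_T}(\bz)$, which the paper invokes explicitly to connect to the main-text formulation in terms of $\calI_h$.
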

\begin{proof}
First note that the final model $h \leftarrow \calA(\calD_\text{train})$ is a deterministic function of only the initial parameters $\bw_0$ and the observed gradients $\bar{\bg}_1,\ldots,\bar{\bg}_T$ without any other access to $\bz$, hence by the post-processing inequality for Fisher information~\cite{zamir1998proof}, we get $\calI_{h}(\bz) \preceq \calI_{\bw_0,\bar{\bg}_1,\ldots,\bar{\bg}_T}(\bz)$.
To bound $\calI_{\bw_0,\bar{\bg}_1,\ldots,\bar{\bg}_T}(\bz)$ for any $\bz \in \calD_\text{train}$, we apply the chain rule for Fisher information~\cite{zamir1998proof}:
\begin{align*}
    \calI_{\bw_0,\bar{\bg}_1,\ldots,\bar{\bg}_T}(\bz) &= \calI_{\bw_0}(\bz) + \mathbb{E}_{\bw_0,\bar{\bg}_1,\ldots,\bar{\bg}_T}\left[ \sum_{t=1}^T \calI_{\bar{\bg}_t}(\bz | \bw_0,\bar{\bg}_1,\ldots,\bar{\bg}_{t-1})\right] \\
    &= \mathbb{E}_{\bw_0,\bar{\bg}_1,\ldots,\bar{\bg}_T}\left[ \sum_{t=1}^T \calI_{\bar{\bg}_t}(\bz | \bw_0,\bar{\bg}_1,\ldots,\bar{\bg}_{t-1})\right],
\end{align*}
where $\calI_{\bw_0}(\bz) = 0$ since $\bw_0$ is independent of the training data. The quantity $\calI_{\bar{\bg}_t}(\bz|\bw_0,\bar{\bg}_1,\ldots,\bar{\bg}_{t-1})$ represents the conditional Fisher information, which depends on the current model parameter $\bw_{t-1}$ through $\bw_0,\bar{\bg}_1,\ldots,\bar{\bg}_{t-1}$.
\end{proof}

\begin{theorem}
Let $\hat{\bg}_t$ be the perturbed gradient at time step $t$ where the batch $\calB_t$ is drawn by sampling a subset of size $B$ from $\calD_\text{train}$ uniformly randomly, and let $q = B / |\calD_\text{train}|$ be the sampling ratio. Then:
\begin{equation*}
    \calI_{\bar{\bg}_t}(\bz) \preceq \mathbb{E}_{\calB_t}[\calI_{\bar{\bg}_t}(\bz|\calB_t)].
\end{equation*}
Furthermore, if the gradient perturbation mechanism is also $\epsilon$-DP, then:
\begin{equation*}
    \calI_{\bar{\bg}_t}(\bz) \preceq \frac{q}{q + (1-q) e^{-\epsilon}} \mathbb{E}_{\calB_t}[\calI_{\bar{\bg}_t}(\bz|\calB_t)].
\end{equation*}
\end{theorem}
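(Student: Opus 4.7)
For both inequalities the central tool is the $\chi^2$-divergence characterization of Fisher information,
$\bu^\top\calI_{\bar{\bg}_t}(\bz)\bu = \lim_{\Delta\to 0}\chi^2(P_{\bz}\,\|\,P_{\bz+\Delta\bu})/\Delta^2$,
which is the identity already invoked in \autoref{sec:fil} to tie FIL to chi-squared divergence. The strategy is to first prove the corresponding $\chi^2$-divergence versions of \autoref{eq:convexity_bound} and \autoref{eq:tight_subsampling_bound} for the marginal density of $\bar{\bg}_t$, and then take the $\Delta\to 0$ limit in the quadratic form $\bu^\top(\cdot)\bu$ to obtain the claimed PSD bounds. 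To set up, let $M(\calB)$ denote the output distribution of the Gaussian-perturbed gradient step on batch $\calB$, couple the batch sampling for $\calD\cup\{\bz\}$ and $\calD\cup\{\bz'\}$ via a common uniform random index set $I\subseteq\{1,\ldots,n\}$ of size $B$, and write $q = B/n$. The marginal decomposes as $P_{\bz} = q\,\bar P_{\bz}^{\mathrm{in}} + (1-q)\,\bar P^{\mathrm{out}}$, where $\bar P_{\bz}^{\mathrm{in}} = \E_{\calB^0}[M(\calB^0\cup\{\bz\})]$ averages over $(B{-}1)$-subsets $\calB^0\subseteq\calD$, while $\bar P^{\mathrm{out}}$ averages $M(\calB)$ over $B$-subsets $\calB\subseteq\calD$ and does \emph{not} depend on $\bz$; consequently $P_{\bz}-P_{\bz'} = q(\bar P_{\bz}^{\mathrm{in}} - \bar P_{\bz'}^{\mathrm{in}})$.

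\paragraph{First bound.}
Because the batch-sampling distribution is $\bz$-independent, convexity of Fisher information under a mixture with $\bz$-independent mixing weights gives $\calI_{\bar{\bg}_t}(\bz) \preceq \E_I[\calI_{\bar{\bg}_t\mid I}(\bz)]$. Whenever $I$ does not include $\bz$, the conditional distribution is $\bz$-independent and its Fisher information vanishes; the surviving terms are exactly $\E_{\calB_t}[\calI_{\bar{\bg}_t}(\bz|\calB_t)]$, which is \autoref{eq:convexity_bound}. (This convexity fact itself follows from joint convexity of $\chi^2$ and the same $\Delta\to 0$ limit used below, so no new ingredient is required.)

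\paragraph{Second bound.}
The extra ingredient is a pointwise lower bound $\bar P^{\mathrm{out}}(h) \ge e^{-\epsilon}\,\bar P_{\bz'}^{\mathrm{in}}(h)$. To establish it, rewrite the uniform average over $B$-subsets $\calB\subseteq\calD$ defining $\bar P^{\mathrm{out}}$ as an equivalent uniform average over pairs $(\calB^0,\bz^*)$ with $|\calB^0|=B-1$ and $\bz^*\in\calD\setminus\calB^0$ (each $\calB$ is counted $B$ times). For every such pair, the batches $\calB^0\cup\{\bz^*\}$ and $\calB^0\cup\{\bz'\}$ are single-swap neighbors, so $\epsilon$-DP of the gradient-perturbation mechanism yields $M(\calB^0\cup\{\bz^*\})(h) \ge e^{-\epsilon}M(\calB^0\cup\{\bz'\})(h)$ pointwise; averaging over $\bz^*$ collapses the right side to $\bar P_{\bz'}^{\mathrm{in}}(h)$. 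It follows that $P_{\bz'}(h) \ge (q+(1-q)e^{-\epsilon})\,\bar P_{\bz'}^{\mathrm{in}}(h)$. Substituting this into $\chi^2(P_\bz\|P_{\bz'}) = \int(P_\bz-P_{\bz'})^2/P_{\bz'}\,dh$, using the decomposition of the numerator, and applying joint convexity of $\chi^2$ to push the $\calB^0$-average outside the divergence produces
\[
\chi^2(P_\bz\,\|\,P_{\bz'}) \;\le\; \frac{q^2}{q+(1-q)e^{-\epsilon}}\;\E_{\calB^0}\!\bigl[\chi^2\bigl(M(\calB^0\cup\{\bz\})\,\|\,M(\calB^0\cup\{\bz'\})\bigr)\bigr].
\]
Setting $\bz'=\bz+\Delta\bu$, dividing by $\Delta^2$, and letting $\Delta\to 0$ converts each $\chi^2$ into a Fisher information quadratic form; rewriting the conditional expectation on $\{\bz\in\calB_t\}$ as the unconditional $\E_{\calB_t}$ absorbs one factor of $q$, producing the amplification factor $q/(q+(1-q)e^{-\epsilon})$ of \autoref{eq:tight_subsampling_bound}.

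\paragraph{Main obstacle.}
The delicate step is the combinatorial inequality $\bar P^{\mathrm{out}} \ge e^{-\epsilon}\bar P_{\bz'}^{\mathrm{in}}$: the two mixtures average over subsets of different sizes ($B$ versus $B{-}1$), so the single-swap $\epsilon$-DP hypothesis cannot be applied directly between them. The re-indexing of $\bar P^{\mathrm{out}}$ as a uniform average over $(\calB^0,\bz^*)$-pairs is the combinatorial manoeuvre that matches each term to a DP-neighbor of a corresponding term of $\bar P_{\bz'}^{\mathrm{in}}$, so the pointwise $e^{-\epsilon}$ bound survives the averaging intact.
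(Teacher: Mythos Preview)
Your proposal is correct and follows essentially the same route as the paper: the same in/out mixture decomposition $P_{\bz}=q\,\bar P_{\bz}^{\mathrm{in}}+(1-q)\bar P^{\mathrm{out}}$, the same swap/re-indexing combinatorics to get $\bar P^{\mathrm{out}}\ge e^{-\epsilon}\bar P_{\bz'}^{\mathrm{in}}$ (the paper phrases this as replacing the $j$-th element of each $\calB_t\not\ni\bz$ by $\bz$ and counting that every $\bz$-containing batch appears $n-B$ times), and then convexity on the remaining ``in'' mixture. The only cosmetic difference is that the paper manipulates the score-function integral $\bu^\top\calI\bu=\int (p'_{\bu})^2/p$ directly, whereas you carry finite-$\Delta$ chi-squared divergences through the argument and pass to the limit at the end; at the infinitesimal level these are the same computation.
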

\begin{proof}
The first bound follows from convexity of Fisher information. Let $\calB_t^1$ and $\calB_t^2$ be two batches and let $p_1,p_2$ be the density functions of the perturbed batch gradient $\bar{\bg}_t$ corresponding to the two batches. For any $\lambda \in (0,1)$, let $\calI_{\bar{\bg}_t}(\bz)$ be the FIM for the mixture distribution with $\mathbb{P}(\calB_t^1) = \lambda$ and $\mathbb{P}(\calB_t^2) = 1 - \lambda$. We will show that:
\begin{equation}
\label{eq:mixture_fi}
\calI_{\bar{\bg}_t}(\bz) \preceq \lambda \calI_{\bar{\bg}_t}(\bz | \calB_t^1) + (1-\lambda) \calI_{\bar{\bg}_t}(\bz | \calB_t^2).
\end{equation}
For any $\bu \in \mathbb{R}^p$, observe that:
\begin{align}
\bu^\top \calI_{\bar{\bg}_t}(\bz | \calB_t^1) \bu &= \int_{\bar{\bg}_t} \bu^\top \left[ \left. \nabla_\zeta \log p_1(\bar{\bg}_t|\zeta)
\nabla_\zeta \log p_1(\bar{\bg}_t|\zeta)^\top \right\vert_{\zeta = \bz} \right] \bu \: p_1(\bar{\bg}_t|\zeta) \; d\bar{\bg}_t \nonumber \\
&= \int_{\bar{\bg}_t} \bu^\top \left[ \left. \nabla_\zeta p_1(\bar{\bg}_t|\zeta)
\nabla_\zeta p_1(\bar{\bg}_t|\zeta)^\top \right\vert_{\zeta = \bz} \right] \bu / p_1(\bar{\bg}_t|\zeta) \; d\bar{\bg}_t \nonumber \\
&= \int_{\bar{\bg}_t} [p_{1,\bu}'(\bar{\bg}_t| \zeta) \vert_{\zeta = \bz}]^2 / p_1(\bar{\bg}_t | \bz) d\bar{\bg}_t, \label{eq:mixture_identity}
\end{align}
where $p_{1,\bu}'(\bar{\bg}_t| \zeta)$ denotes the directional derivative of $p_1(\bar{\bg}_t| \zeta)$ in the direction $\bu$. A similar identity holds for $\calI_{\bar{\bg}_t}(\bz | \calB_t^2)$ and $\calI_{\bar{\bg}_t}(\bz)$. For any $\bar{\bg}_t \in \mathbb{R}^p$, Equation 7 in \cite{cohen1968fisher} shows that
\begin{equation*}
\frac{[\lambda p_{1,\bu}'(\bar{\bg}_t| \zeta) + (1 - \lambda) p_{2,\bu}'(\bar{\bg}_t| \zeta)]^2}{\lambda p_1(\bar{\bg}_t| \zeta) + (1 - \lambda) p_2(\bar{\bg}_t| \zeta)} \leq \lambda \frac{[p_{1,\bu}'(\bar{\bg}_t| \zeta)]^2}{p_1(\bar{\bg}_t| \zeta)} + (1 - \lambda) \frac{[p_{2,\bu}'(\bar{\bg}_t| \zeta)]^2}{p_2(\bar{\bg}_t| \zeta)},
\end{equation*}
which follows by expanding the square and simple algebraic manipulations.
Integrating over $\bar{\bg}_t$ gives that $\bu^\top \calI_{\bar{\bg}_t}(\bz) \bu \leq \lambda \bu^\top \calI_{\bar{\bg}_t}(\bz | \calB_t^1) \bu + (1-\lambda) \bu^\top \calI_{\bar{\bg}_t}(\bz | \calB_t^2) \bu$, from which we obtain the desired result since $\bu$ was arbitrary. Now consider the uniform distribution over $B$-subsets of $\calD_\text{train}$, \emph{i.e.}, $\mathbb{P}(\calB_t) = 1/{n \choose B}$ for all $\calB_t \subseteq \calD_\text{train}, |\calB_t| = B$. The distribution of $\bar{\bg}_t$ is a mixture of ${n \choose B}$ distributions corresponding to each possible $\calB_t$. Applying \autoref{eq:mixture_fi} recursively gives the first bound.

For the second bound, denote by $p_{\calB_t}$ the density function of the noisy gradient when the batch is $\calB_t$, and by $p_{\calB_t, \bu}'$ its directional derivative in the direction $\bu$. Then by \autoref{eq:mixture_identity}:
\begin{align}
\bu^\top \calI_{\bar{\bg}_t}(\bz) \bu
&= \frac{\left[ \sum_{\calB_t \subseteq \calD_\text{train}:|\calB_t| = B} p_{\calB_t, \bu}'(\bar{\bg}_t | \zeta) / {n \choose B} \right]^2}{\sum_{\calB_t \subseteq \calD_\text{train}:|\calB_t| = B} p_{\calB_t}(\bar{\bg}_t | \zeta) / {n \choose B}} \nonumber \\
&= \frac{\left[ q \sum_{\calB_t \subseteq \calD_\text{train}:|\calB_t| = B,\bz \in \calB_t} p_{\calB_t, \bu}'(\bar{\bg}_t | \zeta) / {n-1 \choose B-1} \right]^2}{\sum_{\calB_t \subseteq \calD_\text{train}:|\calB_t| = B} p_{\calB_t}(\bar{\bg}_t | \zeta) / {n \choose B}} \nonumber \\
&= \frac{q^2 \left[ \sum_{\calB_t \subseteq \calD_\text{train}:|\calB_t| = B,\bz \in \calB_t} p_{\calB_t, \bu}'(\bar{\bg}_t | \zeta) / {n-1 \choose B-1} \right]^2}{\sum_{\calB_t \subseteq \calD_\text{train}:|\calB_t| = B,\bz \in \calB_t} p_{\calB_t}(\bar{\bg}_t | \zeta) / {n-1 \choose B-1}} \frac{\sum_{\calB_t \subseteq \calD_\text{train}:|\calB_t| = B,\bz \in \calB_t} p_{\calB_t}(\bar{\bg}_t | \zeta) / {n-1 \choose B-1}}{\sum_{\calB_t \subseteq \calD_\text{train}:|\calB_t| = B} p_{\calB_t}(\bar{\bg}_t | \zeta) / {n \choose B}} \label{eq:q_sq_bound}.
\end{align}
In the second term, for any $\calB_t$ not containing $\bz$, let $\calB_t^{(j)}$ be $\calB_t$ with its $j$-th element replaced by $\bz$ for $j=1,\ldots,B$. Since $\calB_t$ and $\calB_t^{(j)}$ differ in a single element, by the DP assumption we have that $e^{-\epsilon} p_{\calB_t^{(j)}}(\bar{\bg}_t | \zeta) \leq p_{\calB_t}(\bar{\bg}_t | \zeta)$ for all $j$, hence $e^{-\epsilon} \sum_{j=1}^B p_{\calB_t^{(j)}}(\bar{\bg}_t | \zeta) / B \leq p_{\calB_t}(\bar{\bg}_t | \zeta)$, giving:
\begin{align*}
\sum_{\calB_t \subseteq \calD_\text{train}:|\calB_t| = B} p_{\calB_t}(\bar{\bg}_t | \zeta) / {n \choose B} &= \left[ \sum_{\calB_t \subseteq \calD_\text{train}:|\calB_t| = B, \bz \in \calB_t} p_{\calB_t}(\bar{\bg}_t | \zeta) + \sum_{\calB_t \subseteq \calD_\text{train}:|\calB_t| = B, \bz \notin \calB_t} p_{\calB_t}(\bar{\bg}_t | \zeta) \right] / {n \choose B} \\
&\geq \left[ \sum_{\calB_t \subseteq \calD_\text{train}:|\calB_t| = B, \bz \in \calB_t} p_{\calB_t}(\bar{\bg}_t | \zeta) + \sum_{\calB_t \subseteq \calD_\text{train}:|\calB_t| = B, \bz \notin \calB_t} \sum_{j=1}^B e^{-\epsilon} p_{\calB_t^{(j)}}(\bar{\bg}_t | \zeta) / B \right] / {n \choose B} \\
&\stackrel{(*)}{=} \left[ \sum_{\calB_t \subseteq \calD_\text{train}:|\calB_t| = B, \bz \in \calB_t} p_{\calB_t}(\bar{\bg}_t | \zeta) + \frac{n-B}{B} e^{-\epsilon} \sum_{\calB_t \subseteq \calD_\text{train}:|\calB_t| = B, \bz \in \calB_t} \sum_{j=1}^B p_{\calB_t}(\bar{\bg}_t | \zeta) \right] / {n \choose B} \\
&= \left( \frac{B}{n} + \frac{n-B}{n} e^{-\epsilon} \right) \left( \sum_{\calB_t \subseteq \calD_\text{train}:|\calB_t| = B,\bz \in \calB_t} p_{\calB_t}(\bar{\bg}_t | \zeta) / {n-1 \choose B-1} \right) \\
&= (q + (1-q) e^{-\epsilon}) \left( \sum_{\calB_t \subseteq \calD_\text{train}:|\calB_t| = B,\bz \in \calB_t} p_{\calB_t}(\bar{\bg}_t | \zeta) / {n-1 \choose B-1} \right),
\end{align*}
where $(*)$ uses the fact that each $\calB_t$ containing $\bz$ appears in exactly $n-B$ of the $\calB_t^{(j)}$'s. Substituting this bound into the second term in \autoref{eq:q_sq_bound} gives an upper bound of $1/(q + (1-q) e^{-\epsilon})$, hence:
\begin{align*}
\bu^\top \calI_{\bar{\bg}_t}(\bz) \bu &\leq \int_{\bar{\bg}_t} \frac{q^2}{q + (1-q)e^{-\epsilon}} \frac{\left[ \sum_{\calB_t \subseteq \calD_\text{train}:|\calB_t| = B,\bz \in \calB_t} p_{\calB_t, \bu}'(\bar{\bg}_t | \zeta) / {n-1 \choose B-1} \right]^2}{\sum_{\calB_t \subseteq \calD_\text{train}:|\calB_t| = B,\bz \in \calB_t} p_{\calB_t}(\bar{\bg}_t | \zeta) / {n-1 \choose B-1}} d\bar{\bg}_t \\
&= \frac{q^2}{q + (1-q)e^{-\epsilon}} \bu^\top \calI_{\bar{\bg}_t}(\bz | \bz \in \calB_t) \bu.
\end{align*}
Since this holds for any $\bu \in \mathbb{R}^p$, we get that $\calI_{\bar{\bg}_t}(\bz) \preceq \frac{q^2}{q + (1-q) e^{-\epsilon}} \calI_{\bar{\bg}_t}(\bz | \bz \in \calB_t)$. Finally, assuming that the gradient of a sample is independent of other elements in the batch, we have that by the convexity of Fisher information (\autoref{eq:mixture_fi}):
$$q \calI_{\bar{\bg}_t}(\bz | \bz \in \calB_t) \preceq q \mathbb{E}_{\calB_t}[\calI_{\bar{\bg}_t}(\bz | \calB_t) | \bz \in \calB_t] = \mathbb{E}_{\calB_t}[\calI_{\bar{\bg}_t}(\bz | \calB_t) | \bz \in \calB_t] \mathbb{P}(\bz \in \calB_t) = \mathbb{E}_{\calB_t}[\calI_{\bar{\bg}_t}(\bz | \calB_t)],$$ so $\calI_{\bar{\bg}_t}(\bz) \preceq \frac{q}{q + (1-q) e^{-\epsilon}} \mathbb{E}_{\calB_t}[\calI_{\bar{\bg}_t}(\bz | \calB_t)]$.
\end{proof}

\section{Additional Details}
\label{sec:additional_details}

\paragraph{Model architectures.} In \autoref{sec:neural_network}, we trained two small ConvNets on the MNIST and CIFAR-10 datasets. We adapted the model architectures from \cite{papernot2020tempered}, using $\tanh$ activation functions and changing all max pooling to average pooling so that the loss is a smooth function of the input. For completeness, we give the exact architecture details in \autoref{tab:small_convnet} and \autoref{tab:large_convnet}.

\begin{table}[h!]
    \begin{minipage}{.5\linewidth}
      \centering
        \resizebox{\textwidth}{!}{
        \begin{tabular}{ll}
            \toprule
            Layer & Parameters \\
            \midrule
            Convolution $+\tanh$ & 16 filters of $8 \times 8$, stride 2, padding 2 \\
            Average pooling & $2 \times 2$, stride 1 \\
            Convolution $+\tanh$ & 32 filters of $4 \times 4$, stride 2, padding 0 \\
            Average pooling & $2 \times 2$, stride 1 \\
            Fully connected $+\tanh$ & 32 units \\
            Fully connected $+\tanh$ & 10 units \\
            \bottomrule
        \end{tabular}
        }
        \caption{Architecture for MNIST model.}
        \label{tab:small_convnet}
    \end{minipage}%
    \begin{minipage}{.5\linewidth}
      \centering
        \resizebox{\textwidth}{!}{
        \begin{tabular}{ll}
            \toprule
            Layer & Parameters \\
            \midrule
            (Convolution $+\tanh$)$\times 2$ & 32 filters of $3 \times 3$, stride 1, padding 1 \\
            Average pooling & $2 \times 2$, stride 2 \\
            (Convolution $+\tanh$)$\times 2$ & 64 filters of $3 \times 3$, stride 1, padding 1 \\
            Average pooling & $2 \times 2$, stride 2 \\
            (Convolution $+\tanh$)$\times 2$ & 128 filters of $3 \times 3$, stride 1, padding 1 \\
            Average pooling & $2 \times 2$, stride 2 \\
            Fully connected $+\tanh$ & 128 units \\
            Fully connected $+\tanh$ & 10 units \\
            \bottomrule
        \end{tabular}
        }
        \caption{Architecture for CIFAR-10 model.}
        \label{tab:large_convnet}
    \end{minipage} 
\end{table}

\paragraph{Hyperparameters.} Private SGD has several hyperparameters, and we exhaustively test all setting combinations to produce the scatter plots in \autoref{fig:mse_comparison}. \autoref{tab:hyp_mnist} and \autoref{tab:hyp_cifar10} give the choice of values that we considered for each hyperparameter.

\begin{table}[h!]
    \begin{minipage}{.45\linewidth}
      \centering
        \begin{tabular}{ll}
            \toprule
            Hyperparameter & Values \\
            \midrule
            Batch size & $600$ \\
            Momentum & $0.5$ \\
            \# Iterations $T$ & $1000,2000,3000,5000$ \\
            Noise multiplier $\sigma$ & $0.2, 0.5, 1, 2, 5, 10$ \\
            Step size $\rho$ & $0.01, 0.03, 0.1$ \\
            Gradient norm clip $C$ & $1, 2, 4, 8, 16, 32$ \\
            \bottomrule
        \end{tabular}
        \caption{Hyperparameters for MNIST model.}
        \label{tab:hyp_mnist}
    \end{minipage}%
    \begin{minipage}{.55\linewidth}
      \centering
        \begin{tabular}{ll}
            \toprule
            Hyperparameter & Values \\
            \midrule
            Batch size & $200$ \\
            Momentum & $0.5$ \\
            \# Iterations $T$ & $12500,18750,25000,31250,37500$ \\
            Noise multiplier $\sigma$ & $0.1, 0.2, 0.5, 1, 2$ \\
            Step size $\rho$ & $0.01, 0.03, 0.1$ \\
            Gradient norm clip $C$ & $0.1, 0.25, 0.5, 1, 2, 4, 8, 16$ \\
            \bottomrule
        \end{tabular}
        \caption{Hyperparameters for CIFAR-10 model.}
        \label{tab:hyp_cifar10}
    \end{minipage} 
\end{table}

\end{document}